\documentclass[11pt]{article}
\usepackage{fullpage}

\usepackage{microtype}
\usepackage{graphicx}
\usepackage{subfigure}
\usepackage{booktabs} 

\usepackage{hyperref}

\usepackage{amsmath}
\usepackage{amssymb}
\usepackage{mathtools}
\usepackage{amsthm}
\usepackage{complexity}
\usepackage[capitalize,noabbrev]{cleveref}
\usepackage{xcolor}
\usepackage[normalem]{ulem} 
\usepackage{xcolor}

\usepackage[textsize=tiny]{todonotes}

\theoremstyle{definition}
\newtheorem{theorem}{Theorem}[section]

\newtheorem{corollary}[theorem]{Corollary}
\theoremstyle{definition}
\newtheorem{definition}[theorem]{Definition}

\newtheorem{example}[theorem]{Example}
\theoremstyle{remark}
\newtheorem{remark}[theorem]{Remark}

\usepackage{multirow,makecell,colortbl,caption,float}
\usepackage[vlined,boxed,ruled,algo2e]{algorithm2e}

\usepackage{subcaption}
\usepackage{booktabs}

\usepackage{enumitem}
\setlist[itemize,1]{leftmargin=10pt, labelindent=5pt, itemsep=3pt, parsep=3pt}
\setlist[enumerate,1]{leftmargin=10pt, labelindent=5pt, itemsep=3pt, parsep=3pt}
\setlist[itemize,2]{leftmargin=15pt, itemsep=3pt, parsep=3pt}
\setlist[enumerate,2]{leftmargin=15pt, itemsep=3pt, parsep=3pt}

\usepackage{algorithmic}
\usepackage{algorithm}
\usepackage{natbib}

\usepackage[normalem]{ulem}
\usepackage{fancyvrb}
\usepackage{fvextra}
\usepackage{listings}
\usepackage{authblk}

\newcommand{\xx}{{\bf x}}
\newcommand{\weak}{simple}
\newcommand{\weakly}{simply}
\newcommand{\Weak}{Simple}
\newcommand{\strong}{trustable}
\newcommand{\strongly}{trustably}
\newcommand{\Strong}{Trustable}

\newcommand{\clos}{\mathrm{Clos}}
\newcommand{\lrset}[1]{\mathopen{}\left\{#1\right\}}

\setlength{\tabcolsep}{12pt}

\title{On Learning Verifiers and Implications to Chain-of-Thought Reasoning}


\author{\bf Maria-Florina Balcan$^1$ \hfill Avrim Blum$^2$ \hfill  Zhiyuan Li$^2$ \hfill  Dravyansh Sharma$^{2,3}$}

\date{%
    $^1$Carnegie Mellon University\\%
    $^2$Toyota Technological Institute at Chicago\\%
    $^3$Northwestern University\\%
    \small ninamf@cs.cmu.edu, \{avrim,zhiyuanli,dravy\}@ttic.edu
}

\begin{document}

\maketitle

\begin{abstract}
    Chain-of-Thought reasoning  has emerged as a powerful approach for solving complex mathematical and logical problems.  However, it can often veer off track through incorrect or unsubstantiated inferences.  Formal mathematical reasoning, which can be checked with a formal verifier, is one approach to addressing this issue.  However, currently LLMs are simply not good enough to solve complex problems in a formal way, and even just formalizing an informal problem statement can be challenging.  Motivated by this fact, in this work we consider the problem of learning reliable verifiers for  sequential reasoning, including natural language Chain-of-Thought reasoning.  That is, given a problem statement and step-by-step solution in natural language, the aim of the verifier is to output [Yes] if the reasoning steps in the solution are all valid, and [No] otherwise. In this work we give a formal PAC-learning framework for studying this problem. We propose and analyze several natural verification goals, at different levels of strength, in this framework.  We provide sample complexity upper-bounds for learning verifiers satisfying these goals, as well as lower-bound and impossibility results for learning other natural verification objectives without additional assumptions.
\end{abstract}

\section{Introduction}
With increasing use of LLMs to solve complex mathematical and logical problems through chain-of-thought reasoning, it has become crucial to develop verifiers that can check the correctness of these generated solutions.  In particular, even with recent advances, Chain-of-Thought (CoT) reasoning is still widely believed to suffer from catastrophic failures resulting from accumulated errors except for highly limited scenarios~\cite{ling2023deductive,stechly2024chain}. It can be particularly challenging to detect subtle errors in long sequences of reasoning, especially when presented via informal natural expressions. 
This motivates the need for designing effective verifiers for CoT reasoning in natural language.

To study this problem, in this work we introduce a PAC-learning framework for learning verifiers for sequential reasoners. Our learning algorithms are given a sample of some problem statements and labeled reasoning sequences for the problems, and are required to check the correctness of unseen reasoning sequences for unseen problems. We  consider several related but different verification goals and analyze the sample complexity for learning verifiers satisfying these criteria, giving both upper bounds and impossibility results. 

For example, the simplest (weakest) verification goal we consider is that given a random reasoning trace from some underlying distribution $D$, the verifier should output whether the reasoning is correct or faulty (and if faulty, where the first error occurred), and it should have error rate at most some given $\epsilon>0$. The aim is then, with probability $\geq 1-\delta$, to learn such a verifier from labeled data of correct and faulty reasoning traces from the same distribution.  One drawback of this simple verification goal is that it is not secure against adaptive use.  For example, if an LLM reasoner is told by the verifier that a reasoning trace $x_0, x_1, ..., x_t$ is incorrect at the $i$th step, then a natural reaction is to back up and replace $x_i$ with some other step $x_i'$ and try again, and to keep trying until a new reasoning trace is found that succeeds.  But there is now no guarantee the final trace produced is correct, both due to the multiple rounds of querying and because the new traces queried may now be out-of-distribution.

To address the above challenge, we also introduce a stronger, more trustworthy verification goal, in which given some distribution $D$ over {\em problem instances} $x_0$, for most $x_0 \sim D$ the verifier should not accept {\em any} faulty reasoning trace from $x_0$.  Of course, such a verifier should also accept at least some {\em correct} reasoning traces from $x_0$, and we give upper and lower bounds depending on whether we allow the verifier to just accept a designated {\em gold standard} reasoning trace $g(x_0)$ or whether we require it accept a large fraction of all correct reasoning traces from $x_0$ without any additional assumptions. These verifiers are more robust to any distribution shift in the reasoning traces compared to what was available in the training set.

Overall, our work introduces a principled framework for designing verifiers for CoT reasoning using machine learning. Our learnability results highlight the usefulness of our framework for designing verifiers with desirable properties with bounded sample complexity and some fundamental requirements for learning CoT verifiers.

\subsection{Contributions}

Concretely, we make the following contributions.

\begin{itemize}
    \item We introduce a formal framework for studying verifiers for Chain-of-Thought reasoning. Given any problem statement and a sequence of reasoning steps for the problem, we propose the problem of learning verifiers that examine the steps for correctness, and for an incorrect reasoning trace return the first faulty step in the reasoning.
    \item We formally define {\it {\weak } verifiers} that have access to random Chain-of-Thought reasoning sequences labeled ``correct'' or ``incorrect'' along with the first faulty step. We establish sample complexity bounds for learning good simple verifiers in a PAC (probably approximately correct) sense for verifier classes that are finite or have a finite VC dimension.
    \item We next introduce the more powerful {\it {\strong } verifiers}, which only have access to random problems, and a {\it gold standard reasoner}, which provides a small number of guaranteed correct reasoning traces for each sampled problem. We establish PAC learnability of designing verifiers that accept all the gold standard reasoning traces on most problems and never accept faulty reasoning traces, provided that the space of reasoning steps is finite.
    \item We extend our {\strong } verification goal to the case where there may be a large number of gold standard reasoning traces, but only a random correct trace is available to the learner. We establish upper and lower bounds on the sample complexity of learning a verifier that is always sound (i.e., never accepts an incorrect trace) and accepts most of the gold standard traces on most problems.
\end{itemize}

\subsection{Related work}

{\it Chain-of-Thought generation.} Chain-of-Thought and its variants \cite{wei2022chain,zhangautomatic,wangself,yao2023tree} are gaining popularity as paradigms for studying LLM reasoning. CoT reasoning has known connections with planning for complex tasks~\citep{hao2023reasoning}, and in fact, our theoretical abstraction applies to generating plans as well. \cite{joshi2025theory} study the learnability of a time-invariant autoregressive generator for CoT for a fixed generation length $T$, and obtain sample complexity logarithmic in $T$, improving over the linear dependence for time-variant generation in \cite{malach2024auto}. Their work focuses only on in-distribution generalization. In contrast, our {\it \strong } verification model is able to provide strong verification guarantees even for out-of-distribution reasoning, which is crucial in the context of typical CoT generation where the generator may adapt to prompts or feedback. Furthermore, we show concrete computational gaps between generation and verification, the functions in the generator class may not be efficiently evaluatable for a class of problems for which the verification functions are. We also note an equivalence between a special case of our verification model and their generation model, in the sense that an algorithm for one can be used to achieve the other (Remark \ref{rmk:equivalence}). Empirically, LLM-based verifiers have been used to solve specific tasks, even outperforming finetuning-based approaches~\cite{cobbe2021trainingverifierssolvemath}, especially with step-by-step verification~\citep{lightman2023let}. Our trustable verifiers involve checking proofs with respect to some gold standard reasoners, which may be reminiscent of imitation learning~\cite{nakano2021webgpt,yang2022chain}.

{\it Learning with one-sided error.} Our strongest verification model requires the verifier to not accept any incorrect proof but possibly miss some legitimate proofs. The formulation bears resemblance to prior work on learnability under one-sided error~\cite{natarajan1987learning,Kivinen1995LearningRA,onesidedagree}. In particular, our learning algorithm is similar to the closure algorithm proposed in this literature. Other related lines of work that offer similar reliability guarantees using closure-based algorithms include selective classification~\citep{rivest1988learning,el2010foundations}, robustly-reliable learning~\citep{balcan2022robustly,balcan2023reliable,blum2024regularized,balcanlearning} and learning in the presence of strategic improvements~\citep{attiaspac,sharma2025conservative}. In addition, we consider learning from only positively labeled traces (Section \ref{sec:unbounded-g}). A related direction studies learning from positive and unlabeled data for binary classification~\cite{Denis1998PACLF,Denis2000LearningFP}.

{\it Multiclass classification}. Our verifiers not only predict whether a proof is correct or faulty, but also indicate the first incorrect step in the chain of reasoning. The output of the classifier thus takes one of $T+1$ values (correct, or first fault at step $i\in [T]$) and can be thought of as a special type of structured multiclass classification. Multiclass classification has been extensively studied to understand how learnability is affected by the number of different label classes~\cite{Natarajan2004OnLS,Tewari2007OnTC}, with a recent focus on infinite class size~\cite{Brukhim2022ACO,Hanneke2023MulticlassOL,Hanneke2024ImprovedSC}. The latter raises an interesting open question regarding the learnability of CoT reasoners and verifiers for arbitrarily long traces.

{\it Formal methods and learning}. Formal verification~\cite{Clarke1996FormalMS} is a sound approach used to verify the correctness of software or mathematical proofs written according to precise formal specifications. Although LLMs have helped improve some formal verification systems~\cite{Cohen2024LLMBasedSF}, it is not clear if formal verification can be used to verify the natural language reasoning of modern LLMs~\cite{Zhou2024DontTV}. A related approach is to use proofs in a functional programming language (say Lean) to build datasets for training machine learning models capable of writing proofs in that formal language (while they also generate natural language descriptions of their proof steps, the evaluation is through correctness of the lean proof). Note that the ``verifier" in this interaction is the Lean proof system, which only accepts formal language, while our work studies the learnability of verifiers that may evaluate natural language reasoning directly.

{\it Interactive provers.} Our work is related to a recent line of work related to interactive provers. Here, the goal is to generate a good prover that not only produces a good solution, but is also able to convince a given verifier $V$ about the correctness of its proof. This is complementary to our work in the sense that we show how to actually learn good verifiers from data, which can, in turn, provide good automated benchmarks with respect to which the prover establishes verifiability guarantees.  \cite{goldwasser2021interactive} define a PAC model where the prover is able to convince the verifier on most typical inputs, analogous to our SVPAC model. \cite{amit2024models} develop more powerful interactive provers that provide per-instance guarantees, similar to our TVPAC models. In this context, our work opens up concrete questions for interactive verification---whether it is possible to achieve a better sample complexity of learning verifiers, either assuming provers with certain guarantees or with techniques from active learning~\cite{balcan2006agnostic}. \cite{bottaquery,rohatgi2025taming} show theoretical as well as practical advantages of verifier-assisted constrained language generation.

{{\it Exact learning.} Recent work~\citep{gyorgy2025beyond} argues for the need to study exact learning~\citep{angluin1988queries} for sound deductive reasoning. Our work represents advances towards exact learning for verifiers in several ways. First, our ``trustable'' verifiers are robust to distribution shifts of reasoning traces (although not of problem statements, so not exact learning)---in particular when we are ``correct" for a problem statement, we are sound on all the reasoning traces for it. Second, we give an example where we achieve online verification with finite mistake bounds (see Example \ref{ex:online-vectors}) which can be viewed as an example of exact learning. Here we bound the total number of mistakes over problem and reasoning pairs on an arbitrary sequence. \citep{gyorgy2025beyond} argue that powerful verifiers may be useful in achieving exact learning for the reasoners as well.
}

\section{Setup and Definitions}
Let $X$ denote a domain of possible problem statements. For example, an $x_0\in X$ could be a mathematical conjecture, or a Satisfiability problem instance, or the description of an initial state in a Sudoku game or Einstein puzzle. Let $\Sigma$ denote a set of possible reasoning steps; we will think of a ``step'' as a few tokens, such as [Suppose, for contradiction, that $\sqrt{2}=\frac{a}{b}$ for integers $a,b$] or [Clauses $(A\lor B)$ and $(A \lor \lnot B)$ imply $(A)$]. 
A {\em verifier} is a function $h:X \times \Sigma^* \rightarrow \{\mbox{YES},\mbox{NO}\}$, where given input $(x_0, \tau=(x_1, x_2, ..., x_t))$ where $x_0 \in X$ and each $x_i \in \Sigma$ for $i\geq 1$, the verifier should output YES if $x_t$ is a legitimate inference from $(x_0, (x_1, ..., x_{t-1}))$ and should output NO if $x_t$ is not a legitimate inference from $(x_0, (x_1, ..., x_{t-1}))$.  Formally, we can allow $h$ to output arbitrarily if $(x_0, (x_1, ..., x_{t-1}))$ itself contains a faulty step: that is, a ``correct'' $h$ only needs to output correctly on $(x_0, (x_1, x_2, ..., x_t))$ if $(x_0, (x_1, ..., x_{t-1}))$ is itself correct. 

Given a full reasoning trace or proof $(x_0, (x_1, ..., x_T))$, a verifier $h$ is ``run'' on the trace by running $h$ on each prefix, that is, $h(x_0, (x_1))$, $h(x_0, (x_1, x_2))$, ..., $h(x_0, (x_1, ..., x_T))$.  If all of those runs output YES then we define $h$ as saying the reasoning is legitimate, and if any output NO then we define $h$ as saying the reasoning is faulty (and we output the first NO as the location of the first faulty step).  We will use $H$ to denote a family of verifiers.

\begin{remark}
    We will typically want to emulate the behavior of (or, in agnostic verification, at least be competitive with) the best verifier $h^*\in H$. In particular, note that our formulation does not involve an explicit notion of verifying whether the reasoning trace ``completes'' the proof. A good verifier may ``accept'' a reasoning trace if $h^*$ accepts it. This is analogous to the notion of CoT generation studied by \cite{joshi2025theory}, where one hopes to emulate the ``correct'' generator, although they do not consider the extension to agnostic learning (see Appendix \ref{app:agnostic}).
\end{remark}

{
\begin{remark}
Our use of Chain-of-Thought is different from that in the original paper~\citep{wei2022chain}, where it is used primarily as a prompting technique. We use CoT to refer to the standard generation pattern of reasoning models such as o3, Deepseek R1, that is, the model generates the ordered list of intermediate reasoning steps $(s_{1},\dots ,s_{T})$ before its final answer.
Our use of the terminology Chain-of-Thought is consistent with prior theoretical work, e.g. \cite{joshi2025theory,malach2024auto} which studies Chain-of-Thought generation (in contrast, we study verification of the reasoning produced by such models). Suppose that the language model is given an input question and it produces an output consisting of intermediate steps before arriving at the final answer. This behavior may be due to the nature of the training data with such Chain-of-Thought sequences as studied in the prior works mentioned above, or because the language model was prompted to “think step by step”. 
\end{remark}
}

\section{{\Weak } Verification}\label{sec:svpac}

\begin{table*}[t]
\centering \footnotesize
\renewcommand{\arraystretch}{1.2}
\begin{tabular}{cccc}
\hline
Verifier & SVPAC (Sec.\ \ref{sec:svpac}) & TVPAC (Sec.\ \ref{sec:bounded-g}) & $\gamma$-TVPAC   (Sec.\ \ref{sec:unbounded-g})
\\ \hline
\hline
Data format & random tuples   & (random problem,   & random pairs  \\
 &  (problem, reasoning,   &  $\le k$ gold-standard & (problem, correct reasoning)\\
 &  first incorrect step)  &  solutions) & \\\hline
 
Learning Algorithm & ERM on training set &  ERM using 
&Intersection of all consistent \\
&& trees $\mathcal{T}_g(x)$
&  verifiers (Algorithm \ref{alg:intersection_consistent})\\\hline

Sample complexity 
& $\tilde{O}(\log |H|)$& $\tilde{O}(\log |H|)$& $\tilde{\Theta}(|H|)$\\
(finite $H$)
& &&\\\hline

Sample complexity  &$\tilde{O}(\text{VCdim}(H))$&$\tilde{O}(\text{VCdim}(H))$ & $\tilde{O}(\text{VCdim}(H))$ (if \\
 (bounded VCdim($H$)) && &  intersection-closed)\\

\hline
\end{tabular}
\caption{Different verification goals, training data, learning algorithms and sample complexities. The soft-O and soft-$\Theta$ notation suppresses dependence on quantities apart from $|H|$ and VCdim($H$).
}

\label{tab:comparison}
\end{table*}

Let $D$ be a distribution over problems and reasoning traces $(x_0, (x_1, ..., x_t))$ of length $\leq T$, including both legitimate reasoning traces and faulty reasoning traces. 
Assume that we have an i.i.d.\ training sample $S$ of problems and reasoning traces drawn from $D$, and the traces are labeled according to a perfect verifier $h^*\in H\subseteq \{\mbox{YES},\mbox{NO}\}^{X \times \Sigma^*}$. That is, a trace is labeled $\mbox{YES}$ if every step in it is legitimate, and is labeled $\mbox{NO}$ otherwise.  Assume that for the faulty traces, we are also told which is the first faulty step in it. We aim to learn a verifier $h$ from such a sample that has a small error over unseen samples from $D$. Note that we make no assumptions on the size of $\Sigma$ (the set of all possible reasoning steps) for this result.

\paragraph{Goal:} Given the training set $S$ of reasoning traces drawn i.i.d.\ from $D$, our goal is to learn a {\em {\weak } verifier} $h$ with error at most $\epsilon$ over $D$.  Specifically, given a new trace $(x_0, (x_1, \dots, x_t)) \sim D$, we will run $h(x_0, (x_1))$, $h(x_0, (x_1, x_2))$, \dots, $h(x_0, (x_1, \dots, x_t))$ and if all of them output $\mbox{YES}$ then we say the reasoning trace is ``legitimate" and if any output $\mbox{NO}$ then we say the reasoning is ``faulty", and we output the first $\mbox{NO}$ as the location of the first faulty step. We say that the learned verifier $h$ is correct on trace $(x_0, (x_1, \dots, x_t))$ if either \begin{itemize}
    \item[] (a) the entire trace consists of correct reasoning steps (i.e., $h^*(x_0, (x_1, \dots, x_j))=\mbox{YES}$ for all $1\le j\le t$) and all of $h(x_0, (x_1))$, $h(x_0, (x_1, x_2))$, ..., $h(x_0, (x_1, \dots, x_t))$ output $\mbox{YES}$, or
    \item[] (b) the trace is faulty reasoning and $h$ correctly outputs $\mbox{NO}$ on the first faulty step (and outputs $\mbox{YES}$ up until the first faulty step).
\end{itemize}
\noindent Any other behavior is viewed as $h$ making an error on the given reasoning trace.

    \paragraph{ } 
    We will use $\mathsf{f}(h,(x_0,\tau=(x_1, x_2, ..., x_t)))$ to denote the smallest index $j$ such that the verifier $h$ rejects the reasoning sub-trace $(x_1,\dots,x_j)$, that is $h(x_0,(x_1,\dots,x_j))=\mbox{NO}$, and set to $t$ otherwise (if no such index exists). That is, $\mathsf{f}(h,(x_0,\tau))$ is the index of the reasoning trace $\tau$ where $h$ terminates its evaluation of $(x_0,\tau)$, either by finding a faulty step at some index $j\in[t]$ or accepting the reasoning as legitimate by evaluating to $\mbox{YES}$ all the way through the last index $t$. We use this to define the following loss function which gives the 0-1 loss of the verifier $h$ on the input $(x_0,\tau)$

\begin{align*}
        \ell_h(x_0,\tau)=\ell_{h^*}(h, (x_0,\tau)):= \mathbb{I}[h(x_0,\tau_j)\ne h^*(x_0,\tau_j) \text{ for some }j\le \mathsf{f}(h^*,(x_0,\tau))].
    \end{align*}

\noindent Here $\tau_j=(x_1,\dots,x_j)$ denotes a sub-trace of $\tau=(x_1,\dots,x_t)$. Formally, we have the following definition for \weakly-verifiably-PAC learning a verifier from a class of verifiers $H$.

\begin{definition}[SVPAC-learnable]\label{def:SVPAC} 
    Let $X$ denote the problem space and let $H\subseteq \{\mbox{YES},\mbox{NO}\}^{X\times\Sigma^*}$ denote the class of verifiers.  Then a learner is said to \weakly-verifiably-PAC learn $H$ with sample size $m=M(\epsilon,\delta)$ (sample complexity is the smallest such $m$) 
    if for any $h^*\in H$, for any $\epsilon,\delta\in(0,1)$, for any distribution $D$ over $X\times \Sigma^*$ realizable by $h^*$ (i.e., legitimate inference is always given by $h^*$), given a sample $S\sim D^m$,
    the learner outputs a verifier $h$ such that with probability at least $1-\delta$ over the draw of $S$, \begin{align*}
        \Pr_{(x_0,\tau=(x_1, ..., x_t))\sim D}[\ell_{h^*}(h, (x_0,\tau))=1]\le \epsilon.
    \end{align*}
    The learner is said to be proper if $h\in H$. 
\end{definition}

\noindent Note that our definition above requires the learned verifier $h$ to match the behavior of the correct verifier $h^*$ (with high probability) on any new reasoning trace drawn from $D$ up to the first faulty step (if one exists) pointed out by $h^*$. 
We will now show that it is possible to learn such a verifier with a small sample complexity. First, for the case of finite class of verifiers $H$, we observe that a simple union-bound based argument implies that we can learn a good verifier with $O(\log |H|)$ trace samples.

\begin{theorem}\label{thm:svpac-finite}
    Any finite class of verifiers $H$ is SVPAC-learnable with sample complexity $\frac{1}{\epsilon}(\log (|H|) + \log \frac{1}{\delta})$.
\end{theorem}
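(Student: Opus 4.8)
The plan is to run the natural consistency-based (ERM) learner and finish with a union bound, exactly along the lines of the earlier sketch. First I would specify the algorithm: each trace $(x_0,\tau)$ in the sample $S$ comes annotated with its label and, when faulty, its first faulty step, which is exactly the information $\mathsf{f}(h^*,(x_0,\tau))$ together with the values $h^*(x_0,(x_1,\dots,x_j))$ for all $j \le \mathsf{f}(h^*,(x_0,\tau))$. The learner outputs any $h \in H$ that is \emph{consistent} with $S$, i.e.\ satisfies $h(x_0,(x_1,\dots,x_j)) = h^*(x_0,(x_1,\dots,x_j))$ for every $j \le \mathsf{f}(h^*,(x_0,\tau))$ and every $(x_0,\tau) \in S$. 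Such an $h$ exists since $h^* \in H$ and $D$ is realizable by $h^*$; in particular this is a proper learner.

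Second, I would observe that, by the definition of the error event in Definition \ref{def:SVPAC}, the statement ``$h$ is consistent with the single trace $(x_0,\tau)$'' is exactly the complement of ``$h$ errs on $(x_0,\tau)$''. Consequently, if a fixed $h \in H$ has error strictly greater than $\epsilon$ over $D$, then a single draw $(x_0,\tau)\sim D$ fails to witness this with probability at most $1-\epsilon$, and by independence of the $m$ draws the probability that $h$ is consistent with all of $S$ is at most $(1-\epsilon)^m$.

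Third, since $H$ is finite, a union bound over all $h \in H$ with error $> \epsilon$ shows the learner outputs such a bad verifier with probability at most $|H|(1-\epsilon)^m \le |H|\,e^{-\epsilon m}$. Setting this quantity to at most $\delta$ and solving for $m$ gives $m \ge \tfrac{1}{\epsilon}\bigl(\log|H| + \log\tfrac{1}{\delta}\bigr)$, which is precisely the claimed sample complexity; hence with probability at least $1-\delta$ the output verifier has error at most $\epsilon$.

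I do not expect a real obstacle here, as this is the standard finite-class PAC argument. The one point that deserves care is verifying that the per-sample ``no error'' event of Definition \ref{def:SVPAC} and the ``consistent with the sample'' predicate used by the learner genuinely coincide — this hinges on the training annotation supplying $\mathsf{f}(h^*,(x_0,\tau))$ and the labels of all prefixes up to it, so that consistency can actually be checked and matches the quantity being bounded. Once that alignment is confirmed, the rest is the routine $(1-\epsilon)^m \le e^{-\epsilon m}$ estimate and the union bound.
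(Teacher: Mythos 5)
Your proposal is correct and takes essentially the same approach as the paper: run the consistency-based (ERM) learner, bound the probability that any fixed verifier with error $\ge \epsilon$ survives all $m$ draws by $(1-\epsilon)^m$, and union bound over the finite class. Your added care in verifying that the annotation $\mathsf{f}(h^*,(x_0,\tau))$ and prefix labels make "consistent with $S$" coincide exactly with the "no error" event of Definition~\ref{def:SVPAC} is a sound observation, but the substance of the argument is the same as in the paper.
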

\begin{proof} We will simply output any verifier $h\in H$ that is consistent with the training sample (i.e.\ makes no error) and show that it achieves the desired low error for any sample size that is larger than the stated sample complexity. 
    Fix some verifier $h$ with error $\geq \epsilon$ over $D$.  This means that for a random reasoning trace $\xx = (x_0, (x_1, ..., x_t)) \sim D$, with probability $\geq \epsilon$, $h$ makes a mistake, that is, $\ell_h(\xx)=1$. 
    So, this means that the probability that $h$ does {\em not} make a mistake on any example $\xx \in S$ is at most $(1-\epsilon)^{|S|}$.  We now set this to $\delta/|H|$ and solve for $|S| = \frac{1}{\epsilon}(\log (|H|) + \log \frac{1}{\delta})$.
\end{proof}

\noindent We further show that a finite VC dimension of the verifier class is a sufficient condition to SVPAC-learn with respect to $H$. Our sample complexity bounds in this case are $O(\mathsf{VCDim}(H)\log T)$, scaling only logarithmically with the maximum length $T$ of a reasoning trace. We will select $h\in H$ by ERM (Empirical Risk Minimization) over the training sample. Note that we will run a verifier $h$ up to $T$ times on any sample trace to determine whether it runs correctly on it. Our argument adapts the analogous proof in \cite{joshi2025theory}. 

\begin{theorem}\label{thm:vc-simple}
    Any class of verifiers $H$ with finite VC-dimension $\mathsf{VCDim}(H)$ is SVPAC-learnable with sample complexity $O\left(\frac{1}{\epsilon}(\mathsf{VCDim}(H)\log T + \log \frac{1}{\delta})\right)$.
\end{theorem}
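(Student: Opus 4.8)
The plan is to reduce \weakly-verifiably-PAC learning to ordinary \emph{realizable} PAC learning over a derived instance space, and then to control the sample complexity by a Sauer--Shelah argument applied to the verifier class one prefix at a time, following the strategy used for autoregressive generators in \cite{joshi2025theory}.

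First I would package each trace $\xx=(x_0,(x_1,\dots,x_t))$ with $t\le T$ as a single instance and define a $\{0,1\}$-valued loss $L(h,\xx)$ equal to $1$ exactly when $h$ fails condition (a)/(b); by Definition~\ref{def:SVPAC} this is precisely the event that $h(x_0,(x_1,\dots,x_j))\ne h^*(x_0,(x_1,\dots,x_j))$ for some $j\le \mathsf{f}(h^*,(x_0,\tau))$. Since $D$ is realizable by $h^*$, the perfect verifier has $L(h^*,\cdot)\equiv 0$, so the training sample is consistent, and we may have the learner output any empirical-risk minimizer $\hat h\in H$ --- one with zero empirical $L$-loss, which can be found by running each candidate verifier on the at most $T$ prefixes of each of the $m$ training traces. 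It then suffices to show that the induced class $\{\xx\mapsto L(h,\xx):h\in H\}$ of $0/1$ functions over traces is realizably PAC-learnable at the claimed rate, and by the fundamental theorem of statistical learning this follows once we bound its VC dimension.

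Bounding that VC dimension is the combinatorial core of the argument. The key observation is that $L(h,\xx)$ depends on $h$ \emph{only} through the vector $\big(h(x_0,(x_1)),\dots,h(x_0,(x_1,\dots,x_t))\big)$ of its outputs on the $\le T$ prefixes of $\xx$: the stopping index $\mathsf{f}(h^*,(x_0,\tau))$ and the target labels are fixed by $h^*$ and do not depend on $h$. Consequently, for any $n$ traces $\xx^{(1)},\dots,\xx^{(n)}$, the set $P$ of all prefixes they generate has $|P|\le nT$, and by Sauer--Shelah the verifiers in $H$ induce at most $(enT/d)^d$ distinct labelings of $P$, where $d=\mathsf{VCDim}(H)$; since every pattern of $L$-values on the $n$ traces is determined by a labeling of $P$, the induced loss class realizes at most $(enT/d)^d$ patterns on any $n$ instances. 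Because a shattered set of size $n$ would require $2^n$ patterns, and $2^n>(enT/d)^d$ once $n=\Theta(d\log T)$, no such set exists, so $\mathsf{VCDim}(\{L(h,\cdot):h\in H\})=O(\mathsf{VCDim}(H)\log T)$.

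Finally, I would plug this effective VC dimension into the optimal realizable-case PAC sample-complexity bound $O\!\big(\tfrac1\epsilon(\mathrm{VC}+\log\tfrac1\delta)\big)$, giving that the ERM verifier $\hat h$ satisfies $\Pr_{\xx\sim D}[L(\hat h,\xx)=1]\le\epsilon$ with probability at least $1-\delta$ whenever $m=O\!\big(\tfrac1\epsilon(\mathsf{VCDim}(H)\log T+\log\tfrac1\delta)\big)$ --- exactly the stated bound (alternatively one may use the classical $\epsilon$-net/uniform-convergence bound, at the cost of an extra $\log\tfrac1\epsilon$ factor, as in \cite{joshi2025theory}). I expect the only nonroutine step to be the prefix-decomposition observation above: one has to argue carefully that running $h$ on every prefix and comparing with $h^*$ up to its first $\mbox{NO}$ inflates combinatorial complexity by only a factor $T$ inside the logarithm, which hinges on the stopping rule $\mathsf{f}(h^*,\cdot)$ being independent of the candidate $h$. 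Everything after that is the textbook realizable PAC pipeline.
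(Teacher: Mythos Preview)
Your proposal is correct and follows essentially the same route as the paper: define the induced $0/1$ loss class $\{\ell_h:h\in H\}$, observe that two verifiers giving different losses on a trace must disagree on one of its $\le T$ prefixes, bound the growth function by $\Gamma_H(mT)$ via Sauer--Shelah, and conclude $\mathsf{VCDim}(\{\ell_h\})=O(\mathsf{VCDim}(H)\log T)$ before invoking the realizable PAC bound. Your explicit remark that the stopping index $\mathsf{f}(h^*,\cdot)$ is fixed and independent of the candidate $h$ is exactly the point underlying the paper's inequality $\Gamma_{\mathcal{L}_H}(S)\le\Gamma_H(mT)$.
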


\begin{proof}
    We will select $h\in H$ by ERM (Empirical Risk Minimization) over the training sample (in the realizable case this corresponds to selecting a consistent verifier). Note that we will run a verifier $h$ up to $T$ times on any sample trace to determine whether it runs correctly on it. We adapt the analogous proof for CoT generation due to \cite{joshi2025theory}. Let $\tau_j$ be a shorthand for a reasoning sub-trace $(x_1, ..., x_j)$. Recall that the loss function on a given input $(x_0,\tau=(x_1, x_2, ..., x_t))$ is given as
    \begin{align*}
        \ell_h(x_0,\tau)=\mathbb{I}[h(x_0,\tau_j)\ne h^*(x_0,\tau_j) \text{ for some }j\le \mathsf{f}(h^*,(x_0,\tau))],
    \end{align*}
    \noindent and we define the corresponding function class $\mathcal{L_H}=\{\ell_h\mid h\in H\}$. 
    
    Now, given a sample $S=((x_0^{(1)},\tau^{(1)}),\dots,(x_0^{(m)},\tau^{(m)}))$ of size $m$, we are interested in the number of different behaviors of functions $h\in H$ over the sample.  The shattering coefficient

    \begin{align*}
        \Gamma_{\mathcal{L_H}}(S)&=|\{(\ell_h (x_0^{(1)},\tau^{(1)}),\dots,\ell_h (x_0^{(m)},\tau^{(m)}))\mid h\in H\}|\\
        &\le |\{(h(x_0^{(i)},\tau_j^{(i)}))_{i\in[m],j\in[T]}\mid h\in H\}|\\
        &\le \Gamma_H(mT),
    \end{align*}
    \noindent where we have used that if $\ell_{h_1}(x_0,\tau)\ne \ell_{h_2}(x_0,\tau)$ then $h_1(x_0,\tau_j)\ne h_2(x_0,\tau_j)$ for some $j\in[T]$.

    Using Sauer's lemma, for any $m\ge \frac{\mathsf{VCDim}(H)}{T}$, we have
    \begin{align*}
        \Gamma_{\mathcal{L_H}}(m)\le \Gamma_H(mT)\le \left(\frac{emT}{\mathsf{VCDim}(H)}\right)^{\mathsf{VCDim}(H)}.
    \end{align*}

    \noindent A standard lemma (e.g.\ \cite{Anthony1999NeuralNL}, Appendix 1) now implies that $\mathsf{VCDim}(\mathcal{L_H})\le \mathsf{VCDim}(H)\log T$, where $T$ is the maximum length of a reasoning trace.
\end{proof}

\noindent Our model for the {\weak } verifiers above allows for learning a verifier from an arbitrary unknown fixed distribution $D$ over the reasoning traces.
However, a major limitation of this model is that the guarantees only apply to traces drawn according to $D$. If a reasoning model is told that there is a faulty step in its reasoning chain $(x_1,\dots,x_n)$, then it might modify its reasoning slightly to $(x_1,\dots,x_n')$. But the new trace is no longer from $D$ and a verifier trained over samples from $D$ is not guaranteed to work well on this modified reasoning trace. In other words, the feedback from the verifier may be the very reason why there is a distribution shift. In the following sections, we introduce a more powerful model for learning verifiers that are robust to distribution shifts that may be induced as a natural consequence of receiving feedback from the verifier.

\section{{\Strong } Verification}
As discussed above, designing a verifier that only works well for in-distribution reasoning traces may not be desirable in typical scenarios. Motivated by this, we introduce a model for learning more powerful verifiers which provide strong guarantees for {\it any reasoning trace}, as long as the problem statements come from a distribution. 
In particular, we require that for most problem statements, the learned verifiers do not accept {\em any} false traces; that is, the learner should be {\it sound}. However, we potentially relax the requirement that the learner must accept all correct traces. It turns out that we observe two distinct regimes for learnability depending on whether the number of correct reasoning traces is small (and are all available for training) or large (and only a random trace per problem is given in the training set).

\paragraph{Assumptions.} We will make two additional assumptions in order to achieve the above stronger verification guarantee. First, we assume that correct proofs on any problem $x$ are given to the learner by a {\it gold standard} reasoner $g:X\rightarrow {2}^{\Sigma^T}$. That is, $g(x)$ denotes a set of correct reasoning traces for problem $x$, and we will have access to some reasoning traces (made more precise below) generated by $g$ in our training set. For example, $|g(x)|=1$ corresponds to there being a single correct gold standard reasoning trace for the problem $x$, which will be available if the problem $x$ is sampled in the training set. A caveat is that we would not be able to verify reasoning traces that are not generated by the gold standard reasoner available to us, even if they may be legitimate.  Second, we will assume that the set of legal reasoning steps $|\Sigma|$ is finite. 

\paragraph{Goal:} Our training set $S$ will consist of $m$ problems drawn i.i.d.\ from some distribution $D$. For each problem $x$ in the training set, we will run $g$ to create the gold-standard traces, which will be our positive examples. If the number of correct traces is small, we can create negative examples for each way of deviating from the tree of gold-standard proofs (see Section \ref{sec:bounded-g}). Given these examples, our goal is to learn a {\it {\strong } verifier} $h$ that, given a new problem $x\sim D$ and a proposed reasoning trace $\tau$ for it, is able to verify (with high probability) if the reasoning trace is correct according to $g$. That is, $h$ is correct on $x$ if it will reject {\it all} faulty traces on $x$, and will correctly accept {\it most} (or even {\it all}) traces that match the gold standard $g$. In terminology familiar from formal logic, we define the goal for our learned verifiers in terms of sound and complete verification as stated below.

\begin{definition}[$\gamma$-completeness w.r.t.\ $g$ and $\tilde{D}_{\mid x}$; stepwise and overall soundness] \label{def:complete-sound}
    Given a problem $x\in X$, a set of correct reasoning traces $g(x)\subseteq \Sigma^T$ for the problem, and a distribution $\tilde{D}_{\mid x}$ over traces in $g(x)$, a verifier $h:X\times\Sigma^T\rightarrow\{\mbox{YES},\mbox{NO}\}$ is said to  {$\gamma$-completely verify} $x$ w.r.t.\ $g$ and $\tilde{D}_{\mid x}$ if $C_h(x)=\{\tau\in\Sigma^T\mid h(x,\tau_j)=\mbox{YES}\; \forall \text{ prefixes } \tau_j \text{ of } \tau\}$ satisfies   $\mathbb{E}_{\tilde{D}_{\mid x}}[C_h(x)\cap g(x)]\ge \gamma$. Furthermore, we say that $h$ 1-completely verifies $x$ w.r.t. $g$ if $g(x)\subseteq C_h(x)$. 
    $h$ is said to {\it stepwise} soundly verify $x$ if whenever $h(x,\tau_j)=\mbox{YES}$ for $\tau_j\in\Sigma^{\le T}$, then $\tau_j$ is a prefix of some $\tau\in g(x)$ (in particular, this implies $C_h(x)\subseteq g(x)$), and is said to {\it overall} soundly verify $x$ if $C_h(x)\subseteq g(x)$. 
\end{definition}

\noindent $1$-completeness corresponds to the learner essentially accepting all the traces that the gold reasoner $g$ deems as correct. In other words, $1$-completeness w.r.t.\ $g$ (omitting the conditional distribution $\tilde{D}_{\mid x}$) means that $\gamma$-completeness holds in the above definition for $\gamma=1$ for all conditional distributions.
Later, we will relax $1$-completeness to $\gamma=1-\eta$ completeness for small $\eta$ in some more challenging learning settings. {We study two types of soundness guarantees---a stronger ``stepwise'' soundness which guarantees that we reject a proof at the first incorrect step (deviation from a gold-standard step) and ``overall'' soundness that only guarantees that incorrect proofs of length $T$ are rejected. Note that stepwise sound verification implies overall sound verification, but the converse does not necessarily hold.} 

{
\begin{remark}
    We note that soundness and completeness of proof systems is a terminology also used in formal verification and logic, and caution the reader against conflating them with our notions.  A soundness guarantee for a deductive system expresses that all provable sentences are true. Completeness states that all true sentences are provable. While there is an analogy, we remind the reader that our study applies to natural language reasoning while formal logic involves proofs expressed in a very precise formal language and their verification.
\end{remark}
}

\subsection{Sample complexity when the number of correct proofs is small}\label{sec:bounded-g}

In this section, we will assume that the number of gold standard reasoning traces for any problem of interest in $X$ is small. That is, $|g(x)|$ is bounded by a small constant $k$ for any $x\in X$\footnote{A natural example for the case $k=1$ could be a SAT-solver or an Mixed Integer Program solver where the gold-standard solver $g$ uses a deterministic branching rule that we know works pretty well.}. In this case, it is reasonable to expect that we have access to all the gold standard proofs for any problem $x$ in the training sample. We show how to create training samples for learning a verifier using $g$ and establish sample complexity bounds for learning verifier classes that are finite or have finite VC dimension.

Formally,  for each problem $x$ in the training sample $S\sim D^m$, we will run $g$ to generate all the gold standard proofs. These will be our positive examples. To generate negative examples, we consider the first step of deviation from any correct trace for $x$ and add a negative example corresponding to it. Let $\mathcal{T}_g(x)$ denote the tree of positive traces on the problem instance $x$. The root of the tree is the problem statement $x$, and each node represents a valid reasoning step according to one of the positive traces in $g(x)$. By assumption on $|g(x)|$, $\mathcal{T}_g(x)$ has at most $k$ leaf nodes. Now we create negative examples for each internal node $x_i$ of $\mathcal{T}_g(x)$ as follows. Let $(\Tilde{x}_0=x,\Tilde{x}_1,\dots,\Tilde{x}_i=x_i)$ denote the path from the root to $x_i$ on $\mathcal{T}_g(x)$, and $X_i\subset\Sigma$ denote its set of child nodes. Then for every $x'\in \Sigma\setminus X_i$, we create a faulty trace $(\Tilde{x}_0,\Tilde{x}_1,\dots,\Tilde{x}_{i-1},x')$ and add it as a negatively labeled example for the problem $x$. 

Finally, we formally state the definition of {\it {\strong } verification}. Notably, we require the learned verifier to be both complete (w.r.t.\ the gold standard $g$) and sound on problems drawn from $D$. In contrast to {\weak } verifiers, the traces that we expect a {\it {\strong } verifier} to verify can be arbitrary.

\begin{definition}[stepwise and overall TVPAC-learnable]\label{def:tvpac} 
    Let $X$ denote the problem space and let $H\subseteq \{\mbox{YES},\mbox{NO}\}^{X\times\Sigma^*}$ denote the class of verifiers. Let $g(x)\subseteq \Sigma^T$ denote the set of correct reasoning traces for any $x\in X$. Then a learner is said to stepwise (resp.\ overall) {\strongly}-verifiably-PAC learn $H$ with sample size $m=M(\epsilon,\delta)$ (sample complexity is the smallest such $m$) 
    if for any $h^*\in H$, for any $\epsilon,\delta\in(0,1)$, for any distribution $D$ over $X$ realizable by $h^*$ (i.e.\ for all $x$, $g(x)=C_{h^*}(x)=\{\tau\in\Sigma^T\mid h^*(x,\tau_j)=\mbox{YES}\; \forall \text{ prefixes } \tau_j \text{ of } \tau\}$), given a sample $S\sim D^m$ and for each $x\in S$ given access to the set $g(x)$, 
    the learner outputs a verifier $h$ such that with probability at least $1-\delta$ over the draw of $S$, $\Pr_{x\sim D}[h \text{ is $1$-complete w.r.t.\ $g$ and stepwise (resp.\ overall) sound  for } x]\ge 1-\epsilon$. The learner is said to be proper if $h\in H$. 
\end{definition}

\noindent For the case of a finite verifier class $H$, we can still show a $O(\log |H|)$ upper bound on the sample complexity of learning a good verifier.

\begin{theorem}\label{thm:tvpac-finite-h}
    Any finite class of verifiers $H$ is stepwise TVPAC-learnable with sample complexity $\frac{1}{\epsilon}(\log (|H|) + \log \frac{1}{\delta})$.
\end{theorem}
\begin{proof} We will simply output any verifier $H$ that makes no error on the training sample. Assume that $h$ has error $\geq \epsilon$ over $D$. This means that for each $x_0 \in S$, with probability $\geq \epsilon$, $h$ will make a mistake on at least one of the examples created from $x_0$.  Recall that a mistake here may be one of two kinds, either (a) $h$ accepts any other reasoning trace $f(x_0)\notin g(x_0)$, in which case $h$ must say YES to at least one of the negative examples in $S$ that was produced from $x_0$; specifically, it must have mistakenly accepted one of the traces $(x_0, ..., x_{i-1}, x_i')$ where $i$ is the index of the first step where $f(x_0)$ deviates from $\mathcal{T}_g(x_0)$, or (b) $h$ fails to accept some reasoning trace in $g(x_0)$, which is labeled YES in the sample.  So, the probability that $h$ does {\em not} make a mistake on any example $x_0\in S$ is at most $(1-\epsilon)^{|S|}$. We now set this to $\delta/|H|$ and solve for $|S|$.
\end{proof}

\noindent We further show that it is possible to stepwise TVPAC-learn any verifier class with finite VC-dimension.

\begin{theorem}\label{thm:tvpac-vc-h}
    Any  class of verifiers $H$ with finite VC-dimension $\mathsf{VCDim}(H)$ is stepwise TVPAC-learnable with sample complexity $O\left(\frac{1}{\epsilon}(\mathsf{VCDim}(H)\log (kT|\Sigma|) + \log \frac{1}{\delta})\right)$, where $k$ is a bound on the number of correct proofs generated by $g$.
\end{theorem}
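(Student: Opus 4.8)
The plan is to reuse the ERM-plus-uniform-convergence template from the proofs of Theorem~\ref{thm:tvpac-finite-h} and Theorem~\ref{thm:vc-simple}; the only genuinely new ingredient is a polynomial bound on the number of training examples the construction generates per problem. For a sampled problem $x$, the gold tree $\mathcal{T}_g(x)$ has at most $k$ leaves and depth $T$, hence at most $O(kT)$ internal nodes; each internal node contributes at most $|\Sigma|$ negative ``first-deviation'' examples, while the positive examples are the at most $O(kT)$ prefixes appearing in $\mathcal{T}_g(x)$ (recall a verifier is run on every prefix of a trace). So each problem yields at most $N = O(kT|\Sigma|)$ labeled $(\text{problem},\text{trace})$ examples, and since $\Sigma$ is finite these can all be enumerated; the learner outputs any $h \in H$ consistent with all of them, which exists because $h^*$ is consistent by realizability ($g(x) = C_{h^*}(x)$ for all $x$).

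The key structural step, which I would establish first, is the equivalence: \emph{$h$ is $1$-complete w.r.t.\ $g$ and sound for $x$ iff $h$ labels correctly every one of the $\le N$ examples generated from $x$}. If $h$ is correct on all positive prefixes, then running $h$ on any $\tau \in g(x)$ outputs YES at every prefix, so $h$ is $1$-complete; and if $h$ were not sound, some $\tau \notin g(x)$ would be accepted, but since all traces have length exactly $T$ and every leaf of $\mathcal{T}_g(x)$ sits at depth $T$, such a $\tau$ must agree with a gold path up to some internal node $v$ and then take a next step not present in $\mathcal{T}_g(x)$ — and the corresponding prefix of $\tau$ is exactly one of the constructed negative examples, which $h$ accepts (every prefix of $\tau$ gets YES), a contradiction. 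The converse direction is immediate. This reduces the quantity we must control, $\Pr_{x\sim D}[h \text{ not } 1\text{-complete-and-sound for } x]$, to the $0/1$ loss $\ell_h(x) = \mathbb{I}[h \text{ errs on some example generated from } x]$.

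It then remains to bound the complexity of the induced loss class $\mathcal{L}_H = \{\ell_h \mid h \in H\}$. For a sample of $m$ problems, the number of distinct loss patterns $(\ell_h(x^{(1)}),\dots,\ell_h(x^{(m)}))$ is at most the number of labelings that verifiers in $H$ induce on the $\le mN$ generated $(\text{problem},\text{trace})$ pairs, i.e.\ at most $\Gamma_H(mN) \le \left(\frac{emN}{\mathsf{VCDim}(H)}\right)^{\mathsf{VCDim}(H)}$ by Sauer's lemma. Feeding this into the same standard lemma used in the proof of Theorem~\ref{thm:vc-simple} (\cite{Anthony1999NeuralNL}, Appendix~1) gives $\mathsf{VCDim}(\mathcal{L}_H) = O(\mathsf{VCDim}(H)\log N) = O(\mathsf{VCDim}(H)\log(kT|\Sigma|))$, after which the standard realizable-case PAC sample-complexity bound yields $m = O\!\left(\frac1\epsilon(\mathsf{VCDim}(H)\log(kT|\Sigma|) + \log\frac1\delta)\right)$.

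I expect the main obstacle to be pinning down the equivalence in the second paragraph precisely — in particular verifying that \emph{every} way in which a faulty trace can slip past $h$ is witnessed by one of the finitely many negative examples we construct, which relies on traces having fixed length $T$, on all gold leaves being at depth $T$, and on the first-deviation prefix coinciding exactly with a constructed negative example; and, relatedly, confirming that the per-problem example count is $O(kT|\Sigma|)$ rather than exponential in $T$ (it is, because we only need the \emph{first} deviating step, not a whole faulty completion). Once this is in place, the uniform-convergence part is a routine adaptation of machinery already used in the paper.
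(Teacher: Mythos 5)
Your proof takes essentially the same route as the paper's: build the augmented sample of at most $O(kT|\Sigma|)$ per-problem prefixes (positives from $\mathcal{T}_g(x)$, negatives from first-step deviations), run ERM, bound the loss-class growth function by $\Gamma_H(mkT|\Sigma|)$, and finish with Sauer's lemma and the Anthony--Bartlett lemma to get $\mathsf{VCDim}(\mathcal{L}_H)=O(\mathsf{VCDim}(H)\log(kT|\Sigma|))$. One small inaccuracy: the converse of your claimed equivalence is not actually immediate --- soundness only says $C_h(x)\subseteq g(x)$ over full-length traces, which does not force $h$ to output NO on a length-$i<T$ negative prefix --- but since only the forward direction (correct on all constructed examples $\Rightarrow$ 1-complete-and-sound) is needed to upper-bound $\Pr_{x\sim D}[h\text{ not 1-complete-and-sound}]$ by $\Pr[\ell_h(x)=1]$, your argument is unaffected.
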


\begin{proof}
    We select $h\in H$ by Empirical Risk Minimization over the augmented training sample (with positive and negative examples created using $g(x)$) described above (by realizability this corresponds to returning any consistent verifier). Note that we will run a verifier $h$ up to $kT|\Sigma|$ times on any sample trace to determine whether it runs correctly on it. The proof is similar to that of Theorem \ref{thm:vc-simple}. Let $\tau_j$ be a shorthand for a reasoning sub-trace $(x_1, ..., x_j)$. Define a loss function on a given input $(x_0,\tau=(x_1, x_2, ..., x_t))$ as
    $$\tilde{\ell}_h(x_0,\tau):=\mathbb{I}[h(x_0,\tau_j)\ne h^*(x_0,\tau_j)] \text{ for some }j\in[t],$$
    \noindent where $h^*$ is the verifier in $H$ that accepts exactly the correct traces according to $g$, and let the corresponding function class be $\mathcal{L}=\{\tilde{\ell}_h\mid h\in H\}$. 
    
    Now given a sample $S=((x_0^{(1)},g(x_0^{(1)})),\dots,(x_0^{(m)},g(x_0^{(m)})))$ of size $m$, we are interested in the number of different behaviors of functions $h\in H$ over the sample. Given a collection of correct traces $g(x_0)$, define $\tau_g^1(x_0)$ as the collection of all the sub-traces of traces in $g(x_0)$ along with one-step deviations of these sub-traces. Notice $|\tau_g^1(x_0)|\le kT|\Sigma|$ for any $x_0$.
    The shattering coefficient
    \begin{align*}
        \Gamma_{\mathcal{L}}(S)&=|\{(\tilde{\ell}_h (x_0^{(1)},g(x_0^{(1)})),\dots,\tilde{\ell}_h (x_0^{(m)},g(x_0^{(m)})))\mid h\in H\}|\\
        &\le |\{(h(x_0^{(i)},\tilde{\tau}))_{i\in[m],\tilde{\tau}\in\tau_g^1(x_0^{(i)})}\mid h\in H\}|\\
        &\le \Gamma_H(mkT|\Sigma|),
    \end{align*}
    \noindent where we have used that if $\tilde{\ell}_{h_1}(x_0,\tau)\ne \tilde{\ell}_{h_2}(x_0,\tau)$ then $h_1(x_0,\tilde{\tau})\ne h_2(x_0,\tilde{\tau})$ for some $\tilde{\tau}\in\tau_g^1(x_0)$.

    Using Sauer's lemma, for any $m\ge \frac{\mathsf{VCDim}(H)}{kT|\Sigma|}$, we have
    \begin{align*}
        \Gamma_{\mathcal{L}}(m)\le \Gamma_H(mkT|\Sigma|)\le \left(\frac{emkT|\Sigma|}{\mathsf{VCDim}(H)}\right)^{\mathsf{VCDim}(H)}.
    \end{align*}

    \noindent A standard lemma (e.g.\ \cite{Anthony1999NeuralNL}, Appendix 1) now implies that $\mathsf{VCDim}(\mathcal{L})\le \mathsf{VCDim}(H)\log (kT|\Sigma|)$, where $T$ is the maximum length of a reasoning trace.
\end{proof}

\noindent Some remarks are in order. Our stepwise {\strong } verification model has an interesting property that good verifiers in our models for any problem $x$ not only guarantee correctness of the reasoning steps so far, but also prompt the reasoner away from possibly legitimate reasoning steps which may not however result in a solution for the problem $x$. This additional stronger property is not achieved by overall TVPAC verifiers. In fact, for the special case $|g(x)|=k=1$, our verification model is equivalent to the Chain-of-Thought autoregressive generation model of \cite{joshi2025theory}. This is surprising as verifying a proof is usually believed to be easier than generating it (although formally an open question, for instance determining whether $\mathsf{P\ne NP}$), but the strong ``guiding'' abilities of our verifiers can be used for generation.

\begin{remark}\label{rmk:equivalence}
    For $k=1$, our stepwise {\strong } verification model is equivalent to the generation model of \cite{joshi2025theory} provided $|\Sigma|$ is small, in the sense that an efficient algorithm for verification implies an efficient algorithm for generation, and vice versa. To see this, given a stepwise sound verifier $h$ that is guaranteed to accept only the single gold standard trace $g(x)$, we can generate the correct proof using $h$ as follows. Run $h(x,\tau_0)$ for each $\tau_0\in\Sigma$ until one of them, say $x_1$, yields $\mbox{YES}$. Now run $h(x,(x_1,\tau_1))$ for each $\tau_1$ until acceptance, and so on. Doing this $T$ times generates a proof for $x$ that matches $g(x)$. Conversely, to verify if a generator is correct on a problem $x$, we can simply match its reasoning trace against $g(x)$. An interesting consequence of this is that we can hope to use a good verifier to train a good reasoner.
\end{remark}

{
\noindent Thus, there is an equivalence between stepwise TVPAC verifiers and CoT generators (provided the size of the reasoning space $|\Sigma|$ is small). However, the equivalence breaks down for large $|\Sigma|$, and does not hold for overall TVPAC verifiers. In the following remark, we show a computational gap between proving and verification. Namely, for the same problem space, the proof generation functions may not be efficiently evaluatable even though the verifier functions can be evaluated in polynomial time.

\begin{remark}\label{rmk:computational}
    {\it Computational gap between CoT provers and TVPAC verifiers.} We first provide a concrete example showing computational separation between CoT generators and overall TVPAC verifiers. Let $X$ be the set of all satisfiable USAT (Unique-SAT)\footnote{Unique-SAT is a promise problem, the decision version of which asks whether a given Boolean formula, which is promised have either zero or exactly one satisfying assignment, has exactly one satisfying truth assignment. The search version of the problem further asks to determine the unique assignment.} problems in $n$ variables and consisting of $m$ clauses. Suppose each reasoning step consists of a single variable assignment, i.e.,\ $\Sigma=[n]\times\{0,1\}$. Then it is easy to construct a verifier function $h$ that is efficiently computable and is both complete and overall sound. Given a problem-proof pair $(x,\tau)$, the verifier simply checks that no variable is given multiple inconsistent assignments, each variable appearing in $x$ has an assignment, and each clause in $x$ is satisfied.
On the other hand, by the Valiant-Vazirani theorem~\citep{valiant1985np}, USAT has no polynomial-time randomized algorithm (assuming RP $\ne$ NP). Therefore, there is no efficiently computable proof generator for all problems in $X$. The above argument can also be extended to stepwise TVPAC verifiers by using an exponential-size $\Sigma$. In the USAT example above, we could consider one-step proofs ($\Sigma=\{0,1\}^{[n]}$) which set all the variables in a single step. In this case, the reduction in Remark \ref{rmk:equivalence} is no longer polynomial time since $|\Sigma|=2^n$. 
\end{remark}
}
\noindent

\subsection{Linear sample complexity for any number of correct proofs}\label{sec:unbounded-g}

We will now consider an extension to our {\strong } model where we no longer assume a small bound on the number of gold standard traces for every problem $x\in X$. This would make it unreasonable to expect the gold standard reasoner $g$ to generate all proofs for a given problem instance $x$. Instead, we would only require it to generate a random correct proof. For an example, one could think of randomized solvers for constraint satisfaction problems. We will relax the goal of being perfectly complete w.r.t.\ $g$ (Definition \ref{def:complete-sound}) to being almost perfectly complete, while still requiring the verifier to be (overall) sound\footnote{{In this subsection, we assume soundness refers to overall soundness throughout.}}.

Our training set $S$ will consist of problem-trace pairs $(x,\tau)$ where $\tau$ is a random correct trace from $g(x)$. We  learn from only positively labeled examples. 
Formally, we have the following modification for Definition \ref{def:tvpac}.

\begin{definition}[$\gamma$-TVPAC-learnable]\label{def:gamma-tvpac} 
    Let $X$ denote the problem space and let $H\subseteq \{\mbox{YES},\mbox{NO}\}^{X\times\Sigma^*}$ denote the class of verifiers. Let $g(x)\subseteq \Sigma^T$ denote the set of correct reasoning traces for any $x\in X$. Then a learner is said to  $\gamma$-{\strongly}-verifiably-PAC learn $H$ with  sample size $m=M(\epsilon,\delta)$ (sample complexity is the smallest such $m$) 
    if for any $h^*\in H$, for any $\epsilon,\delta\in(0,1)$, for any distribution $D$ over $X$ realizable by $h^*$, 
    given a sample $S\sim D^m$ and for each $x^{(i)}\in S$ given access to {\it one random trace} $\tau_{x^{(i)}}\in\Sigma^T$ sampled according to $\tilde{D}_{\mid x^{(i)}}$ over $g(x^{(i)})$, 
    the learner outputs a verifier $h$ such that with probability at least $1-\delta$ over the draw of $S$ and the traces, $\Pr_{x\sim D}
    [h \text{ is $\gamma$-complete w.r.t.\ $g$ and $\tilde{D}_x$ and a sound verifier for } x]\ge 1-\epsilon$. 
\end{definition}

\noindent An interesting special case is where $\tilde{D}_{\mid x}$ is the uniform distribution over $g(x)$ for all $x$. Here, $g$ would uniformly select one of its correct proofs when queried for generating the training set, and $\gamma$-completeness corresponds to accepting at least a $\gamma$ fraction of the correct proofs of $g$. For this more challenging setting, we first show the existence of an improper learner that achieves learnability in the case where the verifier class $H$ is finite. Our algorithm (Algorithm \ref{alg:intersection_consistent}) outputs the intersection (agreement region) of all consistent verifiers with the training set. We show a bound on the sample complexity of Algorithm \ref{alg:intersection_consistent} which is linear in $|H|$.

\begin{theorem}\label{thm:gamma-tvpac}
  Let $\eta\in(0,1)$. For any finite class of verifiers $H$, Algorithm \ref{alg:intersection_consistent}  $(1-\eta)$-TVPAC-learns $H$ with sample complexity $O\left(\frac{1}{\eta\epsilon}(|H| + \log \frac{1}{\delta})\right)$. Moreover,    Algorithm \ref{alg:intersection_consistent} never accepts a faulty trace for any problem $x\in X$.
\end{theorem}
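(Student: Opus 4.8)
The plan is to analyze Algorithm~\ref{alg:intersection_consistent}, which outputs the verifier $\hat h$ whose accept set is $\bigcap_{h \in H : h \text{ consistent with } S} C_h$, i.e.\ $\hat h$ accepts a trace $\tau$ on problem $x$ iff every verifier in $H$ that is consistent with the (positively-labeled) sample accepts $(x,\tau)$. Two things must be checked: perfect soundness (the ``moreover'' claim) and $(1-\eta)$-completeness on a $(1-\epsilon)$-mass of problems, with high probability, once $m = O\!\left(\frac{1}{\eta\epsilon}(|H| + \log\frac1\delta)\right)$.

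Soundness is immediate and holds deterministically: the target $h^*$ is always consistent with the sample (every training trace is a genuine gold trace, so $h^*$ labels it YES), hence $h^*$ is one of the verifiers in the intersection, so $C_{\hat h} \subseteq C_{h^*}(x) = g(x)$ for every $x \in X$. Thus $\hat h$ never accepts a faulty trace — this gives the last sentence of the theorem, and it requires no sampling at all.

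The substance is completeness. Call $h \in H$ \emph{$\epsilon$-bad} if $\Pr_{x\sim D}[\,h \text{ fails to be } (1-\eta)\text{-complete w.r.t.\ } g,\tilde D_{\mid x} \text{ for } x\,] > \epsilon$, equivalently if with probability $>\epsilon$ over $x$ the accept set $C_h(x)$ misses more than an $\eta$-fraction (under $\tilde D_{\mid x}$) of $g(x)$. Since $\hat h$ accepts \emph{fewer} traces than any consistent $h$, if \emph{every} consistent $h \in H$ is $\epsilon$-good then so is $\hat h$ — wait, that inference goes the wrong way, so the argument must instead bound, for each individual $\epsilon$-bad $h$, the probability it survives (stays consistent with) the sample, and union bound over the at most $|H|$ bad hypotheses. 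For a fixed $\epsilon$-bad $h$: on a random problem $x$, conditioned on the $>\epsilon$-probability event that $C_h(x)$ misses $>\eta$ mass of $g(x)$ under $\tilde D_{\mid x}$, the random trace $\tau_x \sim \tilde D_{\mid x}$ lands in that missed region with probability $>\eta$, and then $h(x,\tau_x)=\mbox{NO}$ on a positively-labeled example, making $h$ inconsistent. So each i.i.d.\ sample point kills $h$ with probability $> \eta\epsilon$, hence $h$ stays consistent on all $m$ points with probability $< (1-\eta\epsilon)^m$. Setting $(1-\eta\epsilon)^m \le \delta/|H|$ and union-bounding gives: with probability $\ge 1-\delta$, no $\epsilon$-bad $h$ is consistent, so in particular $\hat h$'s accept set on a $(1-\epsilon)$-fraction of problems $x$ equals an intersection of accept sets of $\epsilon$-good verifiers; since each such verifier individually $(1-\eta)$-covers $g(x)$ on that $x$ \dots but an intersection of several $(1-\eta)$-covers need not be a $(1-\eta)$-cover. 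This is the one genuine obstacle.

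The fix — and the step I expect to be the crux — is to reformulate ``good'' at the level of the \emph{intersection} rather than per-hypothesis, exactly as in the classic closure/one-sided-error algorithm. Define the loss of the \emph{output} $\hat h$ directly: $\hat h$ is bad on $x$ if $\Pr_{\tau \sim \tilde D_{\mid x}}[\hat h(x,\tau) = \mbox{NO}] > \eta$, i.e.\ some consistent $h$ rejects more than $\eta$-mass of the traces of $x$. We want $\Pr_x[\hat h \text{ bad on } x] \le \epsilon$. Consider the ``disagreement-with-$h^*$'' region $R = \{(x,\tau) : h^*(x,\tau)=\mbox{YES}, \hat h(x,\tau)=\mbox{NO}\}$; note $\hat h$ is bad on $x$ precisely when $R$ has $\tilde D_{\mid x}$-mass $>\eta$ within $g(x)$. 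Now run the argument on the \emph{product} distribution over $(x,\tau_x)$: if $\Pr_x[\hat h \text{ bad on } x] > \epsilon$ then $\Pr_{(x,\tau_x)}[(x,\tau_x) \in R] > \eta\epsilon$; but every point of $R$ witnessed in the training sample would have been a positively-labeled example rejected by some consistent $h$, contradicting that $h$'s consistency — more carefully, one shows that if the bad-problem mass exceeds $\epsilon$ then with probability $\ge 1-\delta$ over $m = O(\frac{1}{\eta\epsilon}(|H|+\log\frac1\delta))$ samples, for \emph{each} of the (at most $|H|$) verifiers $h$ that reject a $>\eta\epsilon$-mass region, that $h$ is caught inconsistent; since $\hat h$'s rejections are the union over consistent $h$'s rejections, removing all such $h$ forces $\hat h$'s bad-problem mass below $\epsilon$. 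So the union bound is over the $\le |H|$ candidate ``culprit'' verifiers, which is where the linear-in-$|H|$ sample complexity comes from (in contrast to the $\log|H|$ of the realizable two-sided case), and the $1/(\eta\epsilon)$ factor is the product of the per-problem failure probability $\epsilon$ and the conditional per-trace catch probability $\eta$. I would write this up by defining the culprit set, bounding its size by $|H|$, applying the single-hypothesis survival bound plus union bound, and then noting the output's guarantee follows because its accept set is the intersection over all non-culprit (hence safe) consistent verifiers.
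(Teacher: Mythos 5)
Your soundness argument is correct and matches the paper. You also correctly diagnose the central difficulty — the intersection of accept sets can be much smaller than any single one, so a per-hypothesis ``good vs.\ bad'' dichotomy followed by a union bound does not transfer to $\hat h$ — and you even state the target sample complexity. But the ``fix'' you sketch does not close that gap. The claim ``removing all $h$ with $p_h > \eta\epsilon$ forces $\hat h$'s bad-problem mass below $\epsilon$'' is false: the surviving verifiers can each reject just under $\eta\epsilon$ mass on \emph{disjoint} regions, so $L_{D^+}(\hat h) \le \sum_{h\in H_S} p_h$ could still be as large as roughly $|H|\cdot\eta\epsilon$, not $\eta\epsilon$. (Pushing the threshold down to $\eta\epsilon/|H|$ fixes that inequality but blows the sample complexity up to $O\bigl(\tfrac{|H|}{\eta\epsilon}(\log|H|+\log\tfrac1\delta)\bigr)$, worse than claimed.) Note also that your survival bound $(1-\eta\epsilon)^m \le \delta/|H|$ yields $m = O\bigl(\tfrac{1}{\eta\epsilon}(\log|H|+\log\tfrac1\delta)\bigr)$, not the linear-in-$|H|$ bound you wrote down, so the kill-and-union-bound template cannot on its own produce the stated rate.

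The paper's argument replaces the threshold-and-union-bound template with a moment-generating-function bound that handles the random sum $\sum_{h\in H_S}p_h$ directly. Starting from $L_{D^+}(\hat h)\le\sum_{h\in H_S}p_h$, one bounds $\mathbb{E}[e^{mL_{D^+}(\hat h)}]\le\prod_{h\in H}\mathbb{E}\bigl[(e^{mp_h})^{\mathbb{I}[h\in H_S]}\bigr]$ and observes that each factor has expectation at most $2$: since $\Pr[h\in H_S]=(1-p_h)^m\le e^{-mp_h}$, we get $\mathbb{E}[(e^{mp_h})^{\mathbb{I}[h\in H_S]}]=1+(1-p_h)^m(e^{mp_h}-1)\le 2-e^{-mp_h}\le 2$. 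The tradeoff you want — ``large $p_h$ is unlikely to survive, small $p_h$ contributes little'' — is captured quantitatively by this single inequality, and multiplying the $|H|$ factors gives $\mathbb{E}[e^{mL_{D^+}(\hat h)}]\le 2^{|H|}$; Markov then yields $L_{D^+}(\hat h)\le\frac{|H|\ln 2+\ln(1/\delta)}{m}$ with probability $1-\delta$. This is where the \emph{linear} $|H|$ (rather than $\log|H|$) enters. Your final decomposition step — $\eta\epsilon\le L_{D^+}(\hat h)$ when the bad-problem mass is $\epsilon$ — is essentially correct and matches the paper, so once you replace the middle step with the MGF bound the argument is complete.
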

\begin{proof}
    {\it Overview.} Let $D^+$ denote the joint distribution over problem-trace pairs $(x,\tau)$ induced by the marginal distribution $D$ and the conditional distribution $\tilde{D}$ used to sample positive traces from $g(x)$. We will show that the expected error of the verifier learned using Algorithm \ref{alg:intersection_consistent} on a test pair $(x,\tau)\sim D^+$ is at most $O\left(\frac{|H|+\log \frac{1}{\delta}}{m}\right)$ with probability at least $1-\delta$. We will further show that the errors are one-sided, i.e.\ we never accept a faulty trace for any problem $x$. Finally, using the law of total expectation, we show that this implies the stated bound on the sample complexity.

    {\it Bound on generalization error.} We define the population error of $h\in\{\mbox{YES},\mbox{NO}\}^{X \times \Sigma^*}$ (any verifier, not necessarily in $H$) on positive examples as $L_{D^+}(h):= \Pr_{(x,\tau)\sim D^+}[h(x,\tau)=\mbox{NO}]$. For each verifier $h_i \in H$, let $p_{h_i} = \Pr_{(x,\tau) \sim D^+}[h_i(x,\tau) = \mbox{NO} \text{ and } h^*(x,\tau) = \mbox{YES}]$ be the probability that $h_i$ incorrectly rejects a valid reasoning trace.

    By the realizability assumption, $h^* \in H_S$ for any sample $S$ (recall that $H_S$ is the set of verifiers consistent with $S$, Algorithm \ref{alg:intersection_consistent}). Since $h'(x,\tau) = \land_{h\in H_S} h(x,\tau)$, the error of $h'$ occurs only when at least one $h \in H_S$ incorrectly rejects a valid trace. Thus,
    \begin{align*}
    L_{D^+}(h') &= \Pr_{(x,\tau) \sim {D^+}}[h'(x,\tau) =\mbox{NO} \text{ and } h^*(x,\tau) = \mbox{YES}] \\
    &= \Pr_{(x,\tau) \sim {D^+}}[\exists h \in H_S \text{ s.t. } h(x,\tau) =\mbox{NO} \text{ and } h^*(x,\tau) = \mbox{YES}] \\
    &\leq \sum_{h \in H_S} \Pr_{(x,\tau) \sim {D^+}}[h(x,\tau) =\mbox{NO} \text{ and } h^*(x,\tau) = \mbox{YES}] \qquad\qquad \text{(by union bound)} \\
    &= \sum_{h \in H_S} p_h.
    \end{align*}

        {\noindent
        For any subset $T \subseteq H$, define
        \[
        L_{D^+}(T)
        := \Pr_{(x,\tau)\sim D^+}\big[\exists h\in T \text{ s.t. } h(x,\tau)=\mbox{NO} \text{ and } h^*(x,\tau)=\mbox{YES}\big].
        \]
        Note that since $h'(x,\tau)=\bigwedge_{h\in H_S} h(x,\tau)$, we have the exact identity $L_{D^+}(h') = L_{D^+}(H_S).$
        
        Fix $\varepsilon>0$ and let $\mathcal{T}_\varepsilon := \{T\subseteq H : L_{D^+}(T)\ge \varepsilon\}$.
        If $L_{D^+}(h') \ge \varepsilon$, then $H_S \in \mathcal{T}_\varepsilon$.
        Therefore, by union bound,
        \[
        \Pr_S[L_{D^+}(h') \ge \varepsilon]
        = \Pr_S[H_S \in \mathcal{T}_\varepsilon]
        \le \sum_{T\in\mathcal{T}_\varepsilon} \Pr_S[H_S = T]
        \le \sum_{T\in\mathcal{T}_\varepsilon} \Pr_S[T \subseteq H_S].
        \]
        
        For a fixed $T$, the event $T \subseteq H_S$ means that every $h\in T$ outputs YES on all $m$ samples.
        Equivalently, none of the $m$ samples falls into the event defining $L_{D^+}(T)$.
        Since the $m$ samples are i.i.d., we get
        \[
        \Pr_S[T \subseteq H_S] = (1 - L_{D^+}(T))^m \le (1-\varepsilon)^m.
        \]
        Thus,
        \[
        \Pr_S[L_{D^+}(h') \ge \varepsilon]
        \le |\mathcal{T}_\varepsilon| \cdot (1-\varepsilon)^m
        \le 2^{|H|}\,(1-\varepsilon)^m
        \le 2^{|H|}\,e^{-m\varepsilon}.
        \]
        Setting the RHS to be at most $\delta$ gives $\varepsilon = \frac{|H|\ln 2 + \ln(1/\delta)}{m}$.
        Therefore, with probability at least $1-\delta$,
        \[
        L_{D^+}(h') \le \frac{|H|\ln 2 + \ln(1/\delta)}{m}.
        \]
        }

    {\it We never accept a faulty trace.} By construction, $h'(x,\tau) = \land_{h \in H_S} h(x,\tau)$. This means $h'(x,\tau) = \mbox{YES}$ only if all $h \in H_S$ output $\mbox{YES}$ for $(x,\tau)$. Since $H_S$ is set to be the set of all verifiers consistent with the training data $S$, and we assume by the realizability assumption that $h^* \in H$, we have $h^* \in H_S$. Therefore, if $h'(x,\tau) = \mbox{YES}$, then $h^*(x,\tau) = \mbox{YES}$ as well. This guarantees that $h'$ never accepts an invalid reasoning trace, i.e., $h'$ has zero false positive rate.

    {\it Sample complexity bound.} We say that $x\in X$ is a {\it bad} problem if $h'$ is not $(1-\eta)$-complete w.r.t.\ $g$ on $x$ (i.e., $h$ accepts fewer than $(1-\eta)$ fraction of correct traces in $g(x)$ in expectation according to $\tilde{D}_{\mid x}$). We say that $\tau$ is a {\it bad} trace for a problem $x$, if $\tau$ is valid according to $g$ but not according to $h'$. If $h'$ makes an error on $(x,\tau)$, then either $x$ is a bad problem, or $x$ is not bad but $\tau$ is bad for $x$. Let $\epsilon=\Pr_D[x \text{ is bad}]$. 
    The total error of $h'$, $L_{D^+}(h')\ge \epsilon \Pr_{\tilde{D}_{\mid x}}[\tau \text{ is bad} \mid x \text{ is bad}]\ge \epsilon\eta$. Using the above bound on $L_{D^+}(h')$, we get with probability $1-\delta$,
    \begin{align*}
    \epsilon\eta\le L_{D^+}(h')  \leq \frac{|H|\ln 2 + \ln\frac{1}{\delta}}{m},
    \end{align*}
    \noindent which implies the claimed sample complexity bound.
\end{proof}

\begin{algorithm}[t]
\caption{Intersection of Consistent Verifiers}
\label{alg:intersection_consistent}
\begin{algorithmic}[1]
\REQUIRE Set of positive problem-trace examples $S=\{(x^{(1)},\tau^{(1)}),\dots,(x^{(m)},\tau^{(m)})\}$ where $x^{(i)}\overset{\text{i.i.d.}}\sim D, \tau^{(i)}\overset{\text{i.i.d.}}\sim \tilde{D}_{\mid x^{(i)}}$, verifier class $H$.

\STATE  $H_S \gets  \{h \in H \mid h(x, \tau) = 1 \text{ for all } (x,\tau) \in S\}$. \hfill\COMMENT{\it Set of verifiers consistent with $S$}
\STATE {\bf return} $h':(x,\tau)\mapsto \land_{h\in H_S} h(x,\tau)$. \hfill\COMMENT{\it predict $\mbox{YES}$ only when every consistent $h$ predicts $\mbox{YES}$}
\end{algorithmic}
\end{algorithm}

\noindent Note that our upper bound above makes no assumption on $H$, other than it is finite. If $H$ is intersection-closed (that is, intersection of verifiers in $H$ is also in $H$), Algorithm \ref{alg:intersection_consistent} corresponds to the closure algorithm and $h'\in H$. In this case, we have much nicer bounds on the sample complexity---$\tilde{O}(\log |H|)$ for finite $H$ and $\tilde{O}(\mathsf{VCDim}(H))$ for $H$ with finite VC dimension (see Appendix \ref{app:intersection-closed}). As a simple example, suppose the set of reasoning steps $\Sigma$ consists of $n$ axioms. The verifier class $H$ consists of $2^n$ verifiers---corresponding to each subset $\sigma\subseteq \Sigma$, there is $h_{\sigma}\in H$ such that $h_{\sigma}$ only accepts traces that consist of reasoning steps from $\sigma$. In this case, the sample complexity of Algorithm \ref{alg:intersection_consistent} is $O(n)$ instead of $O(2^n)$. See Appendix \ref{app:examples} for additional examples.

\paragraph{Lower Bounds.} We further show that the linear dependence on $|H|$ in our upper bounds on the sample complexity of {\strong} verification (given random access to positive proofs in the sense of Definition \ref{def:gamma-tvpac}) is unavoidable without further assumptions on $H$.
Roughly, if we do not have a bound on the number of correct reasoning traces from any given $x_0$, and if we want to learn a verifier $h \in H$ such that for most $x_0$, we have both (a) $h$ accepts {\em at least half} of the correct reasoning traces from $x_0$ and (b) $h$ rejects {\em all} faulty reasoning traces from $x_0$, then without further assumptions on which traces are correct, in the worst case we will need a training set with $\Omega(|H|)$ reasoning traces, for any $|H| \leq |\Sigma|^T$.  This is in contrast to the $O(\log |H|)$ bound in Section \ref{sec:bounded-g} when we had only a single correct trace (or a few correct traces) per $x_0$.

Our first result states that if we want to output a sound proper verifier, i.e.\ $h\in H$ and we only require condition (b) above, then we already need at least $\Omega(|H|)$ samples to achieve TVPAC learnability for any learning algorithm.

\begin{theorem}\label{thm:lower-bound-proper-verifier}
Let $|\Sigma|\ge 2$. For each size $3\le \mathtt{H}\le|\Sigma|^T$ there exists a  finite class $H$ with $|H|=\mathtt{H}$ such that  any proper learner that $\tilde{\epsilon}$-TVPAC learns $H$ (for any $\tilde{\epsilon}\ge 0$, i.e.\ the learned verifier is only required to be sound) 
has sample complexity at least $\Omega(|H|)$.\looseness-1 
\end{theorem}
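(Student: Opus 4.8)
\emph{Plan.} I would prove this by hiding the target verifier $h^*$ inside a family of $\mathtt H$ verifiers that all behave nearly identically on the random positive traces the learner gets to see, while making soundness so fragile that the only sound \emph{proper} hypothesis is $h^*$ itself. Concretely: fix $T$ with $|\Sigma|^T\ge \mathtt H$, pick distinct traces $\tau_1,\dots,\tau_{\mathtt H}\in\Sigma^T$, and let $X=\{x^*\}$ be a single problem (extra ``dummy'' problems on which all verifiers agree may be added if one dislikes point-mass distributions; they carry no information). For $i\in[\mathtt H]$ define $h_i$ by $h_i(x^*,(x_1,\dots,x_j))=\mbox{YES}$ iff $(x_1,\dots,x_j)$ is a prefix of some $\tau_\ell$ with $\ell\ne i$; running $h_i$ on all prefixes then accepts exactly $C_{h_i}(x^*)=\{\tau_1,\dots,\tau_{\mathtt H}\}\setminus\{\tau_i\}$. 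Set $H=\{h_1,\dots,h_{\mathtt H}\}$, so $|H|=\mathtt H$, and take the gold standard $g(x):=C_{h^*}(x)$ as forced by realizability.

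The first observation is that the sets $C_{h_i}(x^*)$ are pairwise incomparable ($\tau_j\in C_{h_i}(x^*)$ iff $j\ne i$), so if $h^*=h_i$ then a proper verifier $h_j$ is sound for $x^*$ only when $j=i$ --- this is exactly where properness is used, and it is why we can afford to drop completeness (a trivial all-$\mbox{NO}$ verifier would be sound but is not in $H$). Hence, for $D$ the point mass on $x^*$ and for any completeness level $\tilde\epsilon<1$ (in particular $\tilde\epsilon=0$), the $\tilde\epsilon$-TVPAC success event reduces to ``the learner outputs $h^*$''. I would then reduce the task to a clean statistical problem: put $\tilde D_{\mid x^*}$ uniform on $g(x^*)$, so that when $h^*=h_i$ the sample $S=(\tau^{(1)},\dots,\tau^{(m)})$ is just $m$ i.i.d.\ uniform draws from the $(\mathtt H-1)$-element set $\{\tau_\ell:\ell\ne i\}$. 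Placing a uniform prior $i\sim\mathrm{Unif}([\mathtt H])$, the likelihood $\Pr[S\mid h^*=h_i]=(\mathtt H-1)^{-m}\,\mathbb{I}[\tau_i\notin S]$ shows the posterior on $i$ given $S$ is uniform over $M(S):=\{\ell:\tau_\ell\notin S\}$, the set of unseen indices.

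Consequently, the Bayes-optimal rule identifies $h^*$ with probability exactly $\mathbb{E}_S[1/|M(S)|]$, and by averaging over the prior this upper-bounds the success probability of every (randomized) proper learner. Since $S$ contains at most $m$ distinct traces, $|M(S)|\ge \mathtt H-m$ deterministically, so the success probability is at most $1/(\mathtt H-m)$ whenever $m<\mathtt H$. Now fix a constant $\delta_0\le\tfrac14$ and any $\epsilon_0\in(0,1)$: a valid $\tilde\epsilon$-TVPAC learner with $m=M(\epsilon_0,\delta_0)$ samples must, for \emph{every} target $h^*\in H$, be sound for $x^*$ with probability $\ge 1-\delta_0$, so its averaged success probability is $\ge 1-\delta_0\ge\tfrac34$; comparing with $1/(\mathtt H-m)$ forces $m\ge \mathtt H-2=\Omega(|H|)$ (using $\mathtt H\ge 3$). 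Imposing any $\tilde\epsilon$-completeness requirement only adds to the learner's burden, so the bound holds for all $\tilde\epsilon\ge 0$; in fact each $h_j$ is $(1-\tfrac{1}{\mathtt H-1})$-complete w.r.t.\ $\tilde D_{\mid x^*}$, so completeness is never the binding constraint here.

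The main obstacle is the \emph{design} of the family rather than any calculation: one needs soundness to pin down $h^*$ exactly while the positive-trace channel leaks essentially nothing about which $h_i$ is active --- the ``one trace missing out of $\mathtt H$'' gadget achieves both, since incomparability makes every wrong proper hypothesis unsound, yet after $m$ samples there remain $\ge\mathtt H-m$ equally likely candidates. The remainder --- verifying that the prefix-running convention faithfully realizes the prescribed accept-sets on a single problem, the posterior computation, reducing randomized learners to deterministic ones by averaging against the uniform prior, and handling the smallest case $\mathtt H=3$ by shrinking $\delta_0$ slightly --- is routine and does not affect the $\Omega(|H|)$ conclusion.
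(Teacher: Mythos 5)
Your proposal is correct and takes essentially the same approach as the paper: a single problem $x_0$, a symmetric ``leave-one-out'' family of $\mathtt{H}$ verifiers whose accepted sets are pairwise incomparable (so only $h_{i^*}$ is sound and properness forces the learner to identify $i^*$), and an information-theoretic argument that $m$ i.i.d.\ positive traces rule out at most $m$ candidate indices, leaving the posterior uniform over $\ge \mathtt{H}-m$ possibilities. Your variant (using $\mathtt{H}$ distinguished traces instead of partitioning all of $\Sigma^T$ into $\mathtt{H}$ blocks) is a minor simplification that makes the uniform-posterior claim cleaner, but the construction and the counting step are the same as the paper's.
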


\begin{proof}
    Select an arbitrary problem $x_0\in X$ and set $D$ to be the constant distribution with support $\{x_0\}$. Also set the conditional trace generating distribution $\tilde{D}_{\mid x_0}$ to  be the uniform distribution over $g(x_0)$ (we will set $g$ later). 
    Let $|\Sigma|=b\ge 2$, so there are $b^T$ possible reasoning traces of length $T$ from $x_0$.  Given $\mathtt{H} \leq b^T$, arbitrarily partition the $b^T$ reasoning traces into $\mathtt{H}$ disjoint sets $S_1, ..., S_\mathtt{H}$, each of size at least $\lfloor \frac{b^T}{\mathtt{H}} \rfloor$. Now, define the verifier class $H=\{h_1,...,h_{\mathtt{H}}\}$ where $h_i$ accepts all reasoning traces {\em except} those in $S_i$. That is, if $C_h=\{t\in\Sigma^T\mid h(t)=\mbox{YES}\}$ denotes the set of traces accepted by $h$, then $C_{h_i}=\Sigma^T\setminus S_i$. Since we have no assumptions on which or how many traces are correct besides realizability, we stipulate that all $b^T$ traces are correct {\em except} for those in $S_{i^*}$ for some uniformly randomly chosen index $i^*$. 

    Now, a proper learner must output some $h_i\in H$. Suppose that the size of the training set $S$ is at most $\mathtt{H}/2$. The learning algorithm which is required to output some $h_i\in H$ can correctly choose $h_i=h_{i^*}$ with probability at most $2/\mathtt{H}$ since it is equally likely that any of the consistent verifiers is the right one. Note that in our construction $h_{i^*}$ is the only sound verifier in $H$.  Thus, $\Pr[h \text{ is not sound}]\ge 1-\frac{2}{\mathtt{H}}\ge 1-\frac{2}{3}=\frac{1}{3}$. Thus, it is impossible to achieve error $\epsilon<\frac{1}{3}$ using $m\le \mathtt{H}/2$ samples, establishing the desired lower bound of $\Omega(\mathtt{H})$.
\end{proof}

\noindent We next show that if we further require the learner to even accept at least a constant fraction of the correct traces (say $\frac{1}{2}$-completeness), in addition to soundness, then the linear lower bound on sample complexity holds even for representation independent learning, i.e.\ even if we allow the learner to output verifiers that are not in the verifier class $H$.

\begin{theorem}\label{thm:lower-bound-improper}
Let $|\Sigma|\ge 2$. For each size $\mathtt{H}\le|\Sigma|^T$ there exists a  finite class $H$ with $|H|=\mathtt{H}$ such that  any (proper or improper) learner that $\frac{1}{2}$-TVPAC learns $H$ has sample complexity at least $\Omega(|H|)$. 
\end{theorem}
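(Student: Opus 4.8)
The plan is to keep the single-problem setup of Theorem~\ref{thm:lower-bound-proper-verifier} but use the $\tfrac12$-completeness requirement as the lever that also defeats improper learners. Fix $b:=|\Sigma|\ge 2$, a single problem $x_0$, and let $D$ be the point mass on $x_0$. Set $q:=\lfloor b^T/\mathtt{H}\rfloor\ge 1$ (using $\mathtt{H}\le b^T$). Take $\mathtt{H}q$ of the $b^T$ length-$T$ traces, partition them into equal blocks $S_1,\dots,S_{\mathtt{H}}$ of size $q$, and let $L:=\Sigma^T\setminus\bigcup_{i}S_i$ (so $|L|=b^T-\mathtt{H}q<\mathtt{H}$) be a set of traces we declare permanently incorrect. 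Define $h_i\in H$ by $C_{h_i}(x_0)=\bigcup_{j\ne i}S_j$ (i.e.\ $h_i$ rejects exactly $S_i\cup L$), extended arbitrarily on other problems; these are $\mathtt{H}$ distinct verifiers. Nature draws $i^*\sim\mathrm{Unif}[\mathtt{H}]$, sets $h^*:=h_{i^*}$ and $g(\cdot):=C_{h^*}(\cdot)$ (so realizability holds), and takes $\tilde{D}_{\mid x_0}$ uniform over $g(x_0)=\bigcup_{j\ne i^*}S_j$; crucially this set has the same cardinality $(\mathtt{H}-1)q$ for every value of $i^*$. Equal blocks plus a ``dead'' leftover $L$ is exactly what makes $|g(x_0)|$ independent of $i^*$ and stops a learner from profiting by accepting $L$: it could reconstruct $L$ from $H$, but $L$ is never correct, so accepting any of it violates soundness.

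Suppose a (possibly improper, possibly randomized) learner uses $m\le(\mathtt{H}-3)/4$ traces. Every sampled trace lies in $g(x_0)$, hence in at most one block and never in $S_{i^*}\cup L$, so the set $I(S):=\{i:S_i\cap S=\emptyset\}$ of blocks still consistent with the sample $S$ has $|I(S)|\ge\mathtt{H}-m$; and because $|g(x_0)|$ does not depend on $i^*$, the conditional law of $i^*$ given $S$ is \emph{exactly} uniform on $I(S)$. Let $h$ be the learner's output (a function of $S$ only) and $R:=\Sigma^T\setminus C_h(x_0)$. If $h$ is sound for $x_0$ then $R\supseteq S_{i^*}\cup L$; if $h$ is additionally $\tfrac12$-complete then $|C_h(x_0)|\ge\tfrac12|g(x_0)|=\tfrac12(\mathtt{H}-1)q$, whence $|R|\le b^T-\tfrac12(\mathtt{H}-1)q$ and therefore $|R\setminus L|\le \mathtt{H}q-\tfrac12(\mathtt{H}-1)q=\tfrac12(\mathtt{H}+1)q$. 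Since the blocks are disjoint, each of size $q$, and lie in $\Sigma^T\setminus L$, at most $\tfrac12(\mathtt{H}+1)$ of them can be contained in $R$; hence $h$ is both sound and $\tfrac12$-complete for $x_0$ only if $i^*$ belongs to the set $B:=\{i:S_i\subseteq R\}$ of size $\le\tfrac12(\mathtt{H}+1)$. Consequently, for any fixed $S$,
\[
\Pr_{i^*\mid S}\!\bigl[h\text{ is sound and }\tfrac12\text{-complete for }x_0\bigr]\;\le\;\frac{|B|}{|I(S)|}\;\le\;\frac{(\mathtt{H}+1)/2}{\mathtt{H}-m}\;\le\;\tfrac23,
\]
using $m\le(\mathtt{H}-3)/4$. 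Averaging over $S$ and the learner's coins gives $\Pr[\text{learner fails on }x_0]\ge\tfrac13$ over the joint draw of $i^*$ and $S$, so some fixed $i^*$ — i.e.\ some fixed instance — makes the learner fail with probability $\ge\tfrac13$. Hence no learner $\tfrac12$-TVPAC-learns this $H$ with $\delta<\tfrac13$ from $\le(\mathtt{H}-3)/4$ traces, giving the claimed $\Omega(|H|)$ bound (vacuous for the few small $\mathtt{H}$).

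The high-level template — a point-mass distribution, verifiers that ``accept everything but one block,'' and a uniformly random hidden bad block — is inherited from Theorem~\ref{thm:lower-bound-proper-verifier}; the new ingredient is that $\tfrac12$-completeness pins down an improper learner just enough: soundness forces the reject set $R$ to swallow $S_{i^*}$, while completeness caps $|R|$, so $R$ can cover only about half the blocks and therefore misses the random $i^*$ with constant probability, no matter what function the learner emits. I expect the main obstacle to be the divisibility bookkeeping. Partitioning all of $\Sigma^T$ into near-equal blocks is tempting but breaks down: the posterior on $i^*$ stops being uniform, and in the regime $b^T/2<\mathtt{H}\le b^T$, where blocks have size $1$ or $2$, the counting degenerates to the useless ``up to $\mathtt{H}$ blocks fit in $R$.'' The equal-blocks-plus-dead-leftover device is precisely what restores both the exact posterior and the clean $\tfrac12(\mathtt{H}+1)$ cap; carrying that accounting through (rather than the conceptual step) is the crux, the remainder being a standard Yao-style averaging.
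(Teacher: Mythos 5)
Your proof is correct and follows the same high-level template as the paper's: a point-mass distribution on one problem, verifiers of the form ``reject exactly one block $S_i$,'' and a uniformly random hidden bad block $i^*$. The genuine improvement is the dead-leftover device: by taking only $\mathtt{H}q$ traces with $q=\lfloor b^T/\mathtt{H}\rfloor$, partitioning those into $\mathtt{H}$ equal blocks, and declaring the remaining $L$ permanently incorrect, you make $|g(x_0)|=(\mathtt{H}-1)q$ independent of $i^*$, which (i) renders the posterior on $i^*$ given $S$ exactly uniform on the $\ge \mathtt{H}-m$ blocks not hit by the sample, and (ii) yields the clean cap $|\{i : S_i\subseteq R\}|\le(\mathtt{H}+1)/2$ from the $\tfrac12$-completeness bound on $|R\setminus L|$. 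The paper's version partitions \emph{all} of $\Sigma^T$ into blocks ``each of size at least $\lfloor b^T/\mathtt{H}\rfloor$'' and informally asserts that $\tfrac12$-completeness forces accepting traces from $\ge \mathtt{H}/4$ unseen blocks; with unequal block sizes that counting is loose, and the posterior on $i^*$ is only approximately uniform since $|g(x_0)|$ then varies with $i^*$ — exactly the issues you flag and sidestep. The crux (soundness forces $S_{i^*}\subseteq R$; completeness caps $|R|$; together $R$ covers at most roughly half the blocks, so a uniform $i^*$ escapes with constant probability) is identical, and your Yao-style averaging to extract a fixed bad instance matches the paper's conclusion. This is the same argument, executed with the bookkeeping the paper elides.
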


\begin{proof} Our initial setup is similar to the proof of Theorem \ref{thm:lower-bound-proper-verifier}. That is, we have the same $X=\{x_0\},D,\tilde{D}_{\mid x_0},g$ and $H$. For simplicity, assume that $\mathtt{H}$ is a multiple of 4.
    
Suppose the training set $S$ has size at most $\mathtt{H}/4$ (i.e.\ there are at most $\mathtt{H}/4$ labeled reasoning traces available, selected uniformly at random from $g(x_0)$).  Any learned verifier $h$ that is $\frac{1}{2}$-complete (i.e.\ accepts at least half of the reasoning traces accepted by $h_{i^*}$) must accept traces from at least $\mathtt{H}/4$ distinct sets $S_i$ that were not observed in training data.  Notice that these $\mathtt{H}/4$ sets constitute at least $1/3$ of the $3\mathtt{H}/4$ sets $S_i$ not observed in the training traces.  This means that for $i^*$ randomly selected from these $3\mathtt{H}/4$ values, with probability at least $1/3$, $h$ accepts a trace in $S_{i^*}$. Thus any $\frac{1}{2}$-complete verifier fails to be sound with probability at least $\frac{1}{3}$. Thus, it is impossible to achieve error $\epsilon<\frac{1}{3}$ using $m\le \mathtt{H}/4$ samples, establishing the desired lower bound of $\Omega(\mathtt{H})$.
\end{proof}

\section{Discussion}

Verification that can be trusted is a strong candidate approach towards powerful automated benchmarks for Chain-of-Thought reasoning. While  verification using formal methods has been successfully deployed for testing software and proofs in formal systems, the task of verifying natural language reasoning seems more challenging. We propose a learning-based approach to designing such verifiers and introduce various verification models with different strengths of guarantees.

Our simplest framework consists of verifiers that learn from  random  proofs from some fixed unknown distribution $D$ annotated with their first faulty step (or correct, if the entire proof is good). Such a verifier would be able to correctly annotate new reasoning sequences from the same distribution, but is not robust to distribution shifts (for example, due to adaptive editing of proofs by incorporating the feedback from the verifier). We next address a stronger type of verifiers that guarantee to reject {\it any} faulty reasoning (possibly very different from the incorrect proofs seen in the training set), by accepting only proofs that adhere to a certain {\it gold standard}. We call these verifiers {\it \strong } and show two distinct regimes for their learnability---small sample complexity when there is a small number of gold standard proofs for any problem, and an unavoidable larger sample complexity linear in the size of the verifier class without this assumption. This raises an interesting question---are there alternative assumptions or models for interactive verification where our linear lower bound on the sample complexity may be circumvented? 

\section*{Acknowledgments}
We thank Feras Saad for pointing out an issue in the original proof of Theorem 4.9 which we
subsequently fixed. This work was supported in part by the Simons Investigator Award MPS-SICS-00826333, a Microsoft Research Faculty Fellowship, and  the National Science Foundation under grants CCF-2212968, ECCS-2216899, and ECCS-2216970.

\bibliographystyle{alpha}
\bibliography{main}

\newpage
\appendix

\section{Intersection-closed Verifier Classes and $\gamma$-TVPAC Learning}\label{app:intersection-closed}

The learnability of intersection-closed concept classes in the standard PAC model is a well-studied problem \cite{Helmbold1990,auer1998line,auer2007new,darnstadt2015optimal}. Optimal sample complexity for these classes was known before Hanneke established the celebrated optimal bounds for (improper) PAC learning of arbitrary concept classes~\cite{Hanneke2015TheOS}. Here we will show that our lower bounds on sample complexity of arbitrary $\gamma$-TVPAC learning in Section \ref{sec:unbounded-g} can be circumvented for intersection-closed verifier classes $H$. We will use $\mathcal{X}:=X\times\Sigma^*$ to denote the domain of the verifiers.
We start with some standard definitions restated in the context of verifier classes.

\begin{definition}[Closure operator of a set]
    For any set $S\subseteq \mathcal{X}$ and any verifier class $H\subseteq 2^\mathcal{X}$, the \emph{closure of $S$ with respect to $H$}, denoted by $\clos_H(S):2^\mathcal{X}\rightarrow 2^\mathcal{X}$, is defined as the intersection of all verifiers in $H$ that contain $S$, that is, $\clos_{H}(S)=\underset {h\in {H}, S\subseteq h}{\bigcap} h$. 
\end{definition}

\noindent In other words, the closure of $S$ is the smallest verifier in ${H}$ which contains $S$. If $\lrset{h\in{H}: S\subseteq h}=\emptyset$, then $\clos_{H}(S)=\mathcal{X}$. This allows us to formally define intersection-closed verifier classes. 

\begin{definition}[Intersection-closed classes]
    A verifier class $H\subset 2^\mathcal{X}$ is \emph{intersection-closed} if for all finite $S\subseteq \mathcal{X}$, $\clos_{H}(S)\in H$. That is, the intersection of all verifiers in $H$ containing an arbitrary subset of the domain belongs to $H$. For finite verifier classes, this is equivalent to saying that for any $h_1,h_2\in H$, the intersection $h_1\cap h_2$ is also in $H$ \cite{natarajan1987learning}.
    \label{def:intersection-closed}
\end{definition}

\noindent Examples of intersection-closed classes include  axis-parallel $d$-dimensional hyperrectangles, intersections of halfspaces, $k$-CNF boolean functions, and subspaces of a linear space.

The \textit{Closure algorithm} is a learning algorithm that generates a verifier by taking the closure of the positive examples in a given dataset, and negative examples do not influence the generated verifier (in fact, negative examples are not available in our $\gamma$-TVPAC model). The verifier returned by this algorithm is always the smallest verifier consistent with all of the positive examples seen so far in the training set. Note that Algorithm \ref{alg:intersection_consistent} is exactly the closure algorithm for intersection-closed verifier classes.
\begin{definition}[Closure algorithm \cite{natarajan1987learning,Helmbold1990}]
    Let $S=\{(x_1,y_1=f^*(x_1)), \ldots, (x_m, y_m=f^*(x_m))\}$ be a set of labeled examples, where $f^*\in {H}$,  \(x_i \in \mathcal{X}\) and \(y_i \in \{0, 1\}\). The verifier \(h^c_S\) produced by the closure algorithm is defined as:\looseness-1
    \[
    h^c_S(x) =
    \begin{cases}
    1, & \text{if } x \in \clos_{H}\left(\{x_i \in S : y_i = 1\}\right), \\
    0, & \text{otherwise}.
    \end{cases}
    \]
    Here, $\clos_{H}\left(\{x_i \in S : y_i = 1\}\right)$ denotes the closure of the set of positive examples in $S$ with respect to ${H}$.\label{def:closure-algorithm}
\end{definition}

\noindent The closure algorithm learns intersection-closed classes with VC dimension $d$ with an optimal sample complexity of $\Theta\left(\frac{1}{\epsilon}(d+\log \frac{1}{\delta})\right)$ \cite{auer2007new,darnstadt2015optimal}. We can use this to establish $\gamma$-TVPAC learning for arbitrary intersection-closed verifier classes with a finite VC dimension. Note that our sample complexity bounds in this case are independent of the length $T$ of the reasoning trace.

\begin{theorem}\label{thm:gamma-tvpac-intersection-closed}
  Let $\eta\in(0,1)$. Let $H$ be a class of verifiers that is intersection-closed and has a finite VC dimension $\mathsf{VCDim}(H)$. Algorithm \ref{alg:intersection_consistent}  $(1-\eta)$-TVPAC-learns $H$ with sample complexity $O\left(\frac{1}{\eta\epsilon}(\mathsf{VCDim}(H) + \log \frac{1}{\delta})\right)$. Moreover, Algorithm \ref{alg:intersection_consistent} never accepts a faulty trace for any problem $x\in X$.
\end{theorem}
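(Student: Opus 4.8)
The plan is to follow the same structure as the proof of Theorem~\ref{thm:gamma-tvpac}, but replace the crude union bound over all of $H_S$ by the sharper sample complexity guarantee available for the closure algorithm on intersection-closed classes. First I would observe that since $H$ is intersection-closed, the verifier $h'$ returned by Algorithm~\ref{alg:intersection_consistent} is exactly $\clos_H(\{(x^{(i)},\tau^{(i)})\})$, i.e.\ the output of the closure algorithm run on the (purely positive) training set, and in particular $h'\in H$. Because $h'\supseteq$ is the smallest verifier in $H$ containing all observed positive pairs and $h^*\in H$ contains them too (by realizability), we have $h'\subseteq h^*$ as subsets of $\mathcal{X}$, so $h'$ never says YES on a pair where $h^*$ says NO; running $h'$ on any trace from a problem $x$ and accepting only if all prefixes pass, this gives soundness (never accepts a faulty trace) on every $x\in X$, exactly as in Theorem~\ref{thm:gamma-tvpac}.

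Next I would bound $L_{D^+}(h') = \Pr_{(x,\tau)\sim D^+}[h'(x,\tau)=\mbox{NO}]$, the probability that the learned verifier rejects a trace drawn from the positive distribution. This is precisely the standard PAC error of the closure algorithm in the realizable setting (the "error region" is $h^*\setminus h'$, which $h'$ labels $0$ and $h^*$ labels $1$), and by the known optimal bound for intersection-closed classes~\cite{auer2007new,darnstadt2015optimal}, with $m = O\big(\frac{1}{\epsilon'}(\mathsf{VCDim}(H)+\log\frac{1}{\delta})\big)$ samples we get $L_{D^+}(h')\le \epsilon'$ with probability $\ge 1-\delta$. Here I would invoke $\mathsf{VCDim}(H)$ directly rather than going through a shattering-coefficient-over-prefixes argument, since completeness w.r.t.\ $g$ and $\tilde D_{\mid x}$ is about the single pair $(x,\tau)$ drawn from $D^+$, not about all prefixes of all deviations as in Theorem~\ref{thm:tvpac-vc-h} — this is why the bound has no $T$, $k$, or $|\Sigma|$ dependence.

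Finally I would convert the $D^+$-error bound into the $(1-\eta)$-completeness-plus-soundness guarantee via the same law-of-total-expectation step as in Theorem~\ref{thm:gamma-tvpac}: call $x$ \emph{bad} if $h'$ is not $(1-\eta)$-complete w.r.t.\ $g$ and $\tilde D_{\mid x}$ on $x$, i.e.\ $\mathbb{E}_{\tilde D_{\mid x}}[\mathbb{I}[h'(x,\tau)=\mbox{NO}]] > \eta$; then $L_{D^+}(h') \ge \Pr_D[x\text{ bad}]\cdot\eta = \epsilon\eta$, so setting $\epsilon' = \epsilon\eta$ and solving gives $m = O\big(\frac{1}{\eta\epsilon}(\mathsf{VCDim}(H)+\log\frac{1}{\delta})\big)$, and soundness holds everywhere by the argument above. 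I do not expect a serious obstacle here; the one point needing slight care is making sure the cited sample-complexity bound for the closure algorithm is being applied to the right loss (rejection of a positive pair) and that the finiteness assumptions ($|\Sigma|$ finite, traces of length $\le T$) only enter to make $\mathcal{X}$ and hence the closure well-defined, not the rate — which is exactly the payoff of intersection-closedness over the generic $\Omega(|H|)$ lower bound of Theorems~\ref{thm:lower-bound-proper-verifier}--\ref{thm:lower-bound-improper}.
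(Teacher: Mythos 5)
Your proposal is correct and follows essentially the same route as the paper's proof: you observe that $h'=\clos_H(\text{positives})\in H$, derive soundness from $h'\subseteq h^*$, invoke the known $O\big(\frac{1}{\epsilon'}(\mathsf{VCDim}(H)+\log\frac{1}{\delta})\big)$ bound for the closure algorithm on intersection-closed classes, and then reuse the bad-problem decomposition from Theorem~\ref{thm:gamma-tvpac} to get $\eta\epsilon\le L_{D^+}(h')$. The only blemish is a typo (``$h'\supseteq$ is the smallest...''); otherwise the reasoning matches the paper's.
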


\begin{proof}
Let $D^+$ denote the joint distribution over problem-trace pairs $(x,\tau)$ induced by the marginal distribution $D$ and the conditional distribution $\tilde{D}$ used to sample positive traces from $g(x)$. Note that in Algorithm \ref{alg:intersection_consistent} the intersection of consistent verifiers $h'\in H$ since $H$ is intersection-closed. We define the population error of $h\in H$  on positive examples as $L_{D^+}(h):= \Pr_{(x,\tau)\sim D^+}[h(x,\tau)=\mbox{NO}]$. Let $p_{h'} = \Pr_{(x,\tau) \sim D^+}[h'(x,\tau) = \mbox{NO} \text{ and } h^*(x,\tau) = \mbox{YES}]$ be the probability that $h'$ incorrectly rejects a valid reasoning trace.  

By construction,  $h'(x,\tau) = \mbox{YES}$ only if all consistent $h \in H_S$ output $\mbox{YES}$ for $(x,\tau)$. Since we assume by the realizability assumption that $h^* \in H$, we have $h^* \in H_S$ which is the set of all verifiers consistent with the sample $S$. Therefore, if $h'(x,\tau) = \mbox{YES}$, then $h^*(x,\tau) = \mbox{YES}$ as well. Or, $h'$ never accepts an invalid reasoning trace.

Thus, $L_{D}(h')=L_{D^+}(h')=p_{h'}$. But, by known results for PAC learning of intersection-closed classes~\cite{auer2007new,darnstadt2015optimal}, $m=O\left(\frac{1}{\varepsilon}(\mathsf{VCDim}(H) + \log \frac{1}{\delta})\right)$ training examples are sufficient to ensure $L_{D^+}(h')\le\varepsilon$. As argued in the proof of Theorem \ref{thm:gamma-tvpac}, we have $\eta\epsilon\le L_{D^+}(h')$, which establishes the claimed sample complexity.
\end{proof}

\noindent We have the following corollary for  learning finite and intersection-closed verifier classes $H$.

\begin{corollary}\label{thm:gamma-tvpac-finite}
    For finite intersection-closed $H$,  Algorithm \ref{alg:intersection_consistent}  $(1-\eta)$-TVPAC-learns $H$ with sample complexity $O\left(\frac{1}{\eta\epsilon}(\log(|H|) + \log \frac{1}{\delta})\right)$.
\end{corollary}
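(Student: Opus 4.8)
The plan is to obtain Corollary~\ref{thm:gamma-tvpac-finite} as a direct specialization of Theorem~\ref{thm:gamma-tvpac-intersection-closed}, using the standard bound relating VC dimension to the cardinality of a finite concept class. First I would note that the hypotheses of Theorem~\ref{thm:gamma-tvpac-intersection-closed} are met: $H$ is assumed intersection-closed, and since $H$ is finite it trivially has finite VC dimension. Hence Algorithm~\ref{alg:intersection_consistent} (which, as observed in the text, is exactly the closure algorithm on an intersection-closed class, so its output lies in $H$) is a $(1-\eta)$-TVPAC learner for $H$ with sample complexity $O\bigl(\tfrac{1}{\eta\epsilon}(\mathsf{VCDim}(H)+\log\tfrac{1}{\delta})\bigr)$, and it never accepts a faulty trace.

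Next I would invoke the elementary fact that any finite concept class $H$ satisfies $\mathsf{VCDim}(H)\le\log_2|H|$: a shattered set of size $d$ forces $|H|\ge 2^d$, so $d\le\log_2|H|$. Substituting this into the sample complexity bound from Theorem~\ref{thm:gamma-tvpac-intersection-closed} immediately yields $O\bigl(\tfrac{1}{\eta\epsilon}(\log|H|+\log\tfrac{1}{\delta})\bigr)$, which is the claimed bound. The one-sided error / soundness conclusion (never accepting a faulty trace) carries over verbatim since it is a structural property of the intersection of consistent verifiers containing the realizable $h^*$, independent of whether we measure complexity via $\mathsf{VCDim}$ or $\log|H|$.

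There is essentially no obstacle here — the corollary is purely a cardinality-to-VC-dimension substitution into an already-proved theorem. The only things worth stating explicitly in the write-up are (i) that finiteness of $H$ gives finite VC dimension so Theorem~\ref{thm:gamma-tvpac-intersection-closed} applies, and (ii) the inequality $\mathsf{VCDim}(H)\le\log_2|H|$, after which the bound follows by monotonicity of the big-$O$ expression in $\mathsf{VCDim}(H)$. If one wanted a slightly more self-contained argument one could instead mimic the proof of Theorem~\ref{thm:gamma-tvpac-intersection-closed} directly, replacing the appeal to the VC-dimension-based PAC bound for intersection-closed classes with the corresponding $\log|H|$-based bound from the same references \cite{auer2007new,darnstadt2015optimal}, but the substitution route is cleaner and requires no new calculation.
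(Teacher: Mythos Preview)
Your proposal is correct and matches the paper's intended argument: the corollary is stated immediately after Theorem~\ref{thm:gamma-tvpac-intersection-closed} with no separate proof, so it is meant to follow by exactly the substitution $\mathsf{VCDim}(H)\le\log_2|H|$ that you describe.
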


\section{Examples}\label{app:examples}
Here we will see several examples to illustrate our verification model. We start with a simple interval-based toy example which shows that SVPAC and $\gamma$-TVPAC learning may be possible even when $H$ and $\Sigma$ are infinite.

\begin{example}[A toy example with interval verifiers] Let $X=\Sigma=\mathbb{R}$. The verifier class consists of functions 
$$H=\{h_{r_1,r_2}:(x_0,\tau=(x_1,\dots,x_i))\mapsto\mathbb{I}[r_1\le x_0-\sum_{j=1}^ix_j\le r_2]\mid r_1, r_2 \in\mathbb{R}_{\ge0}, r_1\le r_2\}.$$ That is, all reasoning traces for which the sum of reasoning steps is at some  distance from $x_0$ that is within an unknown interval $[r_1,r_2]$ are valid. Notably, both $\Sigma$ and $H$ are infinite here. But $\mathsf{VCDim}(H)\le 2$. For example, the training set  consisting of the following reasoning traces
\begin{align*}
    S=\{(0,(1)),
    (1,(3)) ,
    (2,(2,3)) \}
\end{align*}
cannot be labeled $\{\mbox{YES},\mbox{NO},\mbox{YES}\}$ by any $h\in H$.  This is because the distance of the trace sum from the problem $x_0-\sum_{j=1}^ix_j$ for the training points are $1,2,$ and $3$ respectively. So, any $h_{r_1,r_2}$ which labels $(0,(1))$ and $(2,(2,3))$ as $\mbox{YES}$ must also label $(1,(3))$ as $\mbox{YES}$. The finite VC dimension bound implies $H$ is SVPAC learnable with sample complexity $O\left(\frac{1}{\epsilon}\log\frac{1}{\delta}\right)$ by Theorem \ref{thm:vc-simple}. Our results in Section \ref{sec:bounded-g} for 1-complete and sound verification do not apply as $|\Sigma|$ is not finite, but interestingly, the verifier class is still $\gamma$-TVPAC learnable (by Theorem \ref{thm:gamma-tvpac-intersection-closed}) with sample complexity $O\left(\frac{1}{\epsilon}\log\frac{1}{\delta}\right)$ since $H$ is intersection-closed.
\end{example}

The following example is a simple extension of the {\it autoregressive linear thresholds} studied as a family of Chain-of-Thought generators by \cite{joshi2025theory}. Intuitively, for token space $\Sigma=\{0,1\}$, a linear threshold $w\in\mathbb{R}^d$ looks at the last $l=\min\{|x|,d-1\}$ bits of the text $x$ generated so far and generates the next bit as $\mathbb{I}[w_1+w[-l:]x[-l:]]\ge 0$, where $a[-l:]$ denotes the last $l$ elements (coordinates or tokens) of  $a$. Instead, here we use linear thresholds for verification of reasoning traces as described below. In this case, the binary classes induced by the linear thresholds more naturally correspond to the  outcomes $\{\mbox{YES, NO}\}$ of verification (while generation beyond binary tokens needs some extension).

\begin{example}[Linear threshold verifiers]
    Let $X=\mathbb{R}$, $\Sigma\subset\mathbb{R}, |\Sigma|=s$. The verifier class consists of functions induced by $d$-dimensional linear thresholds 
    $$H=\{h_{w,w_0}\!:(x_0,\tau)\mapsto \mathbb{I}[w_0+w_1x_0+w[-l:]\tau[-l:]\ge 0]\mid w\in\mathbb{R}^d,w_0\in\mathbb{R},l=\min\{|\tau|,d-1\}\}.$$
\end{example}

\noindent Thus on a given problem and reasoning trace $(x_0,\tau)$, the  verifier applies a linear threshold to the problem $x_0$ and the last $d-1$ reasoning steps (or all reasoning steps if $|\tau|\le d-1$). Note that $H$ is SVPAC learnable with sample complexity $O\left(\frac{1}{\epsilon}(d+\log\frac{1}{\delta})\right)$ by Theorem \ref{thm:vc-simple}. Similarly, we get a  sample complexity of $O\left(\frac{1}{\epsilon}(d\log (ksT)+\log\frac{1}{\delta})\right)$ for TVPAC learning using Theorem \ref{thm:tvpac-vc-h}. 

We can use the discreteness of $\Sigma$ to give a bound on the number of distinct functions in $H$. Indeed, there are $|\Sigma|^d$ distinct values of $(x_0,\tau[-l:])$ that would determine the number of distinct behaviors of any $h_{w,w_0}\in H$. By Sauer's lemma, we have $\Gamma_H(s^d)\le \left(\frac{2es^d}{d+1}\right)^{d+1}=s^{O(d^2)}$.
This allows us to use Theorem \ref{thm:tvpac-finite-h} to give a bound of $O\left(\frac{1}{\epsilon}(d^2\log (s)+\log\frac{1}{\delta})\right)$ on the sample complexity for TVPAC learning that is independent of the length $T$ of the trace.

As an example of a naturally discrete and finite setting, where the problems, the reasoning steps and the verifiers all come from finite sets, consider the following example.

\begin{example}[Valid reasonings on a graph] 
In this example, valid reasonings are paths in a graph, part of which is given by $x_0$ and part of which is implicit, defined by an unknown ground-truth verifier $h^*$.  
Formally, let $G=(V,E)$ denote the complete graph on $n$ nodes.  Let $X=V\times 2^E$ and $\Sigma=E$. The verifier class consists of functions
\begin{align*}
    H=\{h_{\tilde{E}}:(x_0=(v_0,E_0),(x_1=(v_0,v_1),\dots,x_i=&(v_{i-1},v_i)))\\
    &\mapsto\mathbb{I}[\land_{j\in[i]}\{x_{j}\in E_0\cup \tilde{E}\}]\mid \tilde{E}\subseteq E\}
\end{align*}
that verify whether each step $(x_{j-1},x_{j})$ of the reasoning trace is valid, where a valid step is either an edge from $E_0$ specified in the problem $x_0$, or in the (unknown) set of edges $E^*$ corresponding to $h^*=h_{E^*}$. Note that $H$ is intersection-closed and $|H|=2^{|E|}= 2^{n(n-1)/2}$. The natural approach of building an estimate $\hat{E}$ of $E^*$ by collecting only the edges in the positively labeled traces in the training examples that are not already included in the problem $x_0$ corresponds to the closure algorithm. Therefore, we have SVPAC, TVPAC and $\gamma$-TVPAC learning with $\tilde{O}(n^2/\epsilon)$ sample complexity (using Theorem \ref{thm:svpac-finite}, Theorem \ref{thm:tvpac-finite-h}, and Corollary \ref{thm:gamma-tvpac-finite}). 
\end{example}

The above example could be used to model a discrete puzzle like the farmer-fox-chicken-corn puzzle or the sliding tile puzzle. The vertices would correspond to the different discrete states of the puzzle. The final goal state is assumed to be fixed (for example, the sliding tiles make the desired picture),  and the problem statement $x_0=(v_0,\emptyset)$, where $v_0$ is some initial state.

Since one of our main motivations is to learn good verifiers for Chain-of-Thought reasoning, for which Large Language Models (LLMs) have been proposed as good candidate generators, it is natural to try to understand our results for verification of natural language reasoning produced by these generators. In the following example, we suppose that we have a finite collection of $K$ verifiers which are also LLMs.

\begin{example}[Finite set of LLM verifiers]
    Let $\mathcal{A}$ denote the (finite) set of tokens in a natural language. Let $X=\Sigma=\mathcal{A}^R$, where $R$ is the maximum number of tokens allowed in a single problem statement or reasoning step. Let $H$ be a collection of $K$ LLM verifiers. Under realizability, our results imply that the sample complexity of learning a verifier with small error is $\tilde{O}\left(\frac{\log K}{\epsilon}\right)$ for SVPAC and TVPAC learning, and $\tilde{O}\left(\frac{K}{(1-\gamma)\epsilon}\right)$ for $\gamma$-TVPAC learning (using Theorem \ref{thm:svpac-finite}, Theorem \ref{thm:tvpac-finite-h}, and Theorem \ref{thm:gamma-tvpac} respectively). We show sample complexity bounds without the realizability assumption in Appendix \ref{app:agnostic}. 
\end{example}

\noindent We conclude this section with an example where it is possible to learn a verifier online with a bounded number of mistakes.

\begin{example}\label{ex:online-vectors}
    The problem space is $X=\mathbb{R}^{d\times n}$, that is, each problem $x_0$ consists of a finite number of vectors in  $\mathbb{R}^{d}$. Reasoning steps are also vectors in $\Sigma=\mathbb{R}^{d}$. $h^*$ is also given by a set of vectors in $\mathbb{R}^{d}$ (unknown to the learner). For a given problem $x_0$, a reasoning step $x_i$ is said to be valid if it lies in $\mathrm{span}(x_0,h^*)$, the subspace spanned by the problem $x_0$ and the hidden vectors $h^*$, and incorrect otherwise. The verifier is presented by a sequence of problem-reasoning pairs $(x_0^{(1)},x_1^{(1)}), (x_0^{(2)},x_1^{(2)}),\dots$, and gives an assessment YES or NO for each pair. The verifier is said to suffer a mistake if either it accepts a faulty reasoning $x_1^{(i)}\notin \mathrm{span}(x_0^{(i)},h^*)$, or says NO for a valid reasoning $x_1^{(j)}\in \mathrm{span}(x_0^{(j)},h^*)$.

    First, we make a simplifying assumption that all problem vectors in any problem $x_0$ lie in a space orthogonal to $\mathrm{span}(h^*)$. For this case, we will show an online learner that is sound (i.e.\ never accepts a faulty reasoning) and makes at most $\mathrm{dim}(\mathrm{span}(h^*))\le d$ mistakes. We initialize $h=\{\}$ and will maintain the invariant that $\mathrm{span}(h)$ is a subspace of $\mathrm{span}(h^*)$. Given $(x_0^{(i)},x_1^{(i)})$, we accept the reasoning if $x_1^{(i)}$ lies in $\mathrm{span}(x_0^{(i)},h)$, and reject otherwise. Our invariant $\mathrm{span}(h)\subseteq \mathrm{span}(h^*)$ implies that we never accept an invalid reasoning. If we make a mistake on $(x_0^{(i)},x_1^{(i)})$, then we add the component of $x_1^{(i)}$ orthogonal to $\mathrm{span}(x_0^{(i)},h)$ (i.e., $x_1^{(i)}-\mathrm{proj}(x_1^{(i)}, \mathrm{span}(x_0^{(i)},h))$, where $\mathrm{proj}(v,S)$ denotes the projection of vector $v$ onto the subspace $S$) to $h$. This increases $\mathrm{dim}(\mathrm{span}(h))$ by 1 and maintains our invariant $\mathrm{span}(h)\subseteq \mathrm{span}(h^*)$. Therefore, this algorithm makes at most $\mathrm{dim}(\mathrm{span}(h^*))\le d$ mistakes.

    Next, we show a small mistake bound even when we remove the orthogonality assumption above. Any problem $x_0$ is given by a finite collection of  vectors in $\mathbb{R}^d$ as above, and assume that $h^*$ is given by a single vector in $\mathbb{R}^d$. In this case, we will show a mistake bound of $d+1$, but will allow two-sided error (in the previous case, our algorithm never resulted in false positives). Let $S^*$ denote a subspace maintained by the algorithm that has the invariant that it always contains $h^*$. Initialize $S^*=\mathbb{R}^d$. Given a problem $(x_0,x_1)$, we first check if $x_1\in \mathrm{span}(x_0)$, and return YES if so (which is always correct). Else, we return NO until the first mistake. At this point we set $S^*=\mathrm{span}(x_0,x_1)$. For any new instance $(\overline{x}_0,\overline{x}_1)$, 
we update $S^*$ upon mistakes. We consider the following cases.
    \begin{itemize}
        \item[1.] $S^*\subseteq \mathrm{span}(\overline{x}_0,\overline{x}_1)$. 
        \begin{itemize}
            \item[a.] $S^*\subseteq \mathrm{span}(\overline{x}_0)$. In this case, $h^*\in \mathrm{span}(\overline{x}_0)$ or $\mathrm{span}(\overline{x}_0,h^*)=\mathrm{span}(\overline{x}_0)$. Thus, it suffices to output YES iff $\overline{x}_1\in \mathrm{span}(\overline{x}_0)$. We do not make any mistakes in this case.
            \item[b.] $S^*\nsubseteq \mathrm{span}(\overline{x}_0)$. In this case, we say YES. Since $h^*\in S^*\subseteq \mathrm{span}(\overline{x}_0,\overline{x}_1)$, we can write $h^*=\overline{a}.\overline{x}_0+b\overline{x}_1$. If we made a mistake, then $\overline{x}_1\notin\mathrm{span}(\overline{x}_0,h^*)$. This implies $b= 0$ and $h^*\in \mathrm{span}(\overline{x}_0)$. Thus, we can set $S^*$ to $S^*\cap \mathrm{span}(\overline{x}_0)$. The dimension is reduced by at least one, since we assumed $S^*\nsubseteq \mathrm{span}(\overline{x}_0)$.
        \end{itemize}
        \item[2.] $S^*\nsubseteq \mathrm{span}(\overline{x}_0,\overline{x}_1)$. In this case, we say $\mathbb{I}[\overline{x}_1\in \mathrm{span}(\overline{x}_0)]$. We don't make a mistake when we say YES. If we made a mistake, then $\overline{x}_1\in \mathrm{span}(\overline{x}_0,h^*)$ and $\overline{x}_1\notin \mathrm{span}(\overline{x}_0)$. This implies $\overline{x}_1=\overline{a}.\overline{x}_0+bh^*$ with $b\ne 0$. Therefore, $h^*\in \mathrm{span}(\overline{x}_0,\overline{x}_1)$. Thus, we can safely update $S^*$ to $S^*\cap \mathrm{span}(\overline{x}_0,\overline{x}_1)$, and the dimension of $S^*$ goes down by at least 1.
    \end{itemize}
    Thus, $\mathrm{dim}(S^*)$ goes down by 1 every time we make a mistake except possibly for the first time, for a total mistake bound of $d+1$.
\end{example}\

\section{Beyond Realizability}\label{app:agnostic}
The main focus of our work is the realizable case, where a perfect $h^*$ lies in our verifier class $H$ which makes no mistakes on any problem-trace pair (i.e., accepts exactly the right reasoning traces for all problems in $X$). This property is particularly desirable for verification. However, it might be the case that our search space for verifiers is limited and no verifier in $H$ perfectly verifies all the reasoning traces for all the problems of interest. This is known as the {\it agnostic} setting in  PAC learning terminology, and the goal is to learn a verifier $h$ that has error almost as small as the verifier with the smallest error in $H$. Here we will formally define agnostic SVPAC and TVPAC learning and use arguments from standard PAC learning theory to show sample complexity bounds for agnostic learning of verifiers. Note that the corresponding question for Chain-of-Thought generation was left open by prior work~\cite{joshi2025theory}. 

\subsection{Agnostic {\weak } verifiers}

The ``label'' for a problem-trace pair $(x_0,\tau=(x_1, x_2, ..., x_t))$ is given by $y=(y_1,\dots,y_t)\in\{\mbox{YES},\mbox{NO}\}^t$. Given $y\in \{\mbox{YES},\mbox{NO}\}^T$ let $\mathsf{f}(y)$ denote the smallest index $i\in[T]$ such that $y_i=\mbox{NO}$ (and $\mathsf{f}(y)=T$ if $y_i=\mbox{YES}$ for all $i$). For a verifier $h\in H$ define its loss w.r.t.\ label $y$ as 

$$\ell_h(x,\tau=(x_1,  ..., x_T); y=(y_1,\dots,y_T)):=\mathbb{I}[h(x_0, (x_1,  ..., x_j))\ne y_j]\quad\text{ for some } j\le \mathsf{f}(y).$$

\noindent That is, we penalize the verifier for rejecting a trace while it is still correct according to the label $y$, or failing to reject at the first index that the label indicates as faulty (the rest of the label does not matter in this case). Formally, we have the following definition for agnostic learning.

\begin{definition}[agnostic SVPAC-learnability]\label{def:aSVPAC} 
    Let $X$ denote the problem space and $H\subseteq \{\mbox{YES},\mbox{NO}\}^{X\times\Sigma^*}$ denote the class of verifiers.  Then a learner is said to be an agnostic \weakly-verifiably-PAC learner for $H$ with sample size $m=M(\epsilon,\delta)$ (sample complexity is the smallest such $m$) 
    if  for any $\epsilon,\delta\in(0,1)$, for any distribution $D$ over $X\times \Sigma^T\times       \{\mbox{YES},\mbox{NO}\}^T$, for $h^*\in\text{argmin}_{h\in H}\mathbb{E}_{(x_0,\tau,y)\sim D}[\ell_h(x,\tau; y)]$, given a sample $S\sim D^m$,
    the learner outputs a verifier $h$ such that with probability at least $1-\delta$ over the draw of $S$, $$\mathbb{E}_{(x_0,\tau, y)\sim D}[\ell_h(x_0,\tau,y)- \ell_{h^*}(x_0,\tau,y)] \le \epsilon.$$ The learner is said to be proper if $h\in H$. 
\end{definition}

\noindent We  now show that it is possible to agnostically SVPAC learn  a verifier with small sample complexity for  any finite class of verifiers $H$. A simple Hoeffding's  bound based argument familiar from standard agnostic PAC learning implies that we can learn a good verifier with $\tilde{O}\left(\frac{1}{\epsilon^2}\log |H|\right)$ labeled problem-trace samples.

\begin{theorem}\label{thm:asvpac-finite}
    Any finite class of verifiers $H$ is agnostically SVPAC-learnable with sample complexity $O\left(\frac{1}{\epsilon^2}(\log (|H|) + \log \frac{1}{\delta})\right)$.
\end{theorem}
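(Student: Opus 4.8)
The plan is to reduce agnostic SVPAC learning to a standard uniform-convergence argument over the finite loss class $\mathcal{L}_H = \{\ell_h \mid h \in H\}$ and then output the empirical risk minimizer. First I would observe that for each fixed $h \in H$, the quantity $\ell_h(x_0,\tau;y) \in \{0,1\}$ is a single bounded random variable depending on the draw $(x_0,\tau,y) \sim D$; evaluating it requires running $h$ on at most $T$ prefixes of $\tau$, but this is merely a computational remark and does not affect the statistical argument. Writing $L_D(h) = \mathbb{E}_{(x_0,\tau,y)\sim D}[\ell_h(x_0,\tau;y)]$ and $\hat{L}_S(h)$ for its empirical average over the i.i.d.\ sample $S$ of size $m$, each $\hat{L}_S(h)$ is an average of $m$ i.i.d.\ $\{0,1\}$ random variables with mean $L_D(h)$.

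Next I would apply Hoeffding's inequality to a single $h$: $\Pr_S[|\hat{L}_S(h) - L_D(h)| \ge \epsilon/2] \le 2e^{-m\epsilon^2/2}$. Taking a union bound over all $h \in H$ gives that with probability at least $1-\delta$, simultaneously for every $h \in H$ we have $|\hat{L}_S(h) - L_D(h)| \le \epsilon/2$, provided $2|H|e^{-m\epsilon^2/2} \le \delta$, i.e.\ $m = O\!\left(\frac{1}{\epsilon^2}\left(\log|H| + \log\frac{1}{\delta}\right)\right)$, which matches the claimed bound. Then I would let the learner output $\hat{h} \in \arg\min_{h \in H}\hat{L}_S(h)$, the empirical risk minimizer, and conclude by the standard ERM argument: $L_D(\hat{h}) \le \hat{L}_S(\hat{h}) + \epsilon/2 \le \hat{L}_S(h^*) + \epsilon/2 \le L_D(h^*) + \epsilon$, where $h^* \in \arg\min_{h\in H} L_D(h)$ and the middle inequality uses optimality of $\hat{h}$ on the sample. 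This yields $\mathbb{E}_{(x_0,\tau,y)\sim D}[\ell_{\hat h}(x_0,\tau;y) - \ell_{h^*}(x_0,\tau;y)] \le \epsilon$, which is exactly the agnostic SVPAC guarantee of Definition~\ref{def:aSVPAC}, and $\hat{h} \in H$ so the learner is proper.

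There is essentially no deep obstacle here; the one subtlety worth being careful about is that the loss $\ell_h$ is defined through the label $y$ via the "first-fault" truncation index $\mathsf{f}(y)$ (unlike the realizable case, where $\mathsf{f}$ was taken with respect to $h^*$). But once $(x_0,\tau,y)$ is drawn, $\ell_h(x_0,\tau;y)$ is a deterministic function of $h$ and the sample point, so it is still just a $\{0,1\}$-valued random variable and Hoeffding applies verbatim; the only thing to note is that the learner must receive the full label vector $y$ (or at least its prefix up to $\mathsf{f}(y)$) as part of each training example, which is consistent with the setup. I would therefore present the proof as: (i) define $L_D$, $\hat L_S$ on $\mathcal{L}_H$; (ii) Hoeffding plus union bound for uniform convergence at scale $\epsilon/2$; (iii) the three-line ERM sandwich to transfer uniform convergence to an excess-risk bound. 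If one wanted the cleaner constant, one could cite the textbook agnostic-finite-class bound directly, but the self-contained version above is short enough to include in full.
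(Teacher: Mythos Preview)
Your proposal is correct and follows essentially the same approach as the paper: define the $\{0,1\}$-valued loss $\ell_h$, apply Hoeffding's inequality to each fixed $h$, take a union bound over the finite class $H$ to get uniform convergence at scale $\epsilon/2$, and finish with the standard ERM sandwich inequality. The paper's proof is slightly terser but the argument, constants, and sample-complexity expression are identical.
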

\begin{proof} We use ERM, i.e.\ simply output any verifier $\hat{h}\in H$ that achieves the smallest total loss $\ell_h$ on the training sample and show that it achieves the stated sample complexity. Since the examples in the training sample $S$ are iid draws from $D$, the loss of a fixed $h$ on the examples is an iid $\{0,1\}$-valued variable. By  Hoeffding's bound,
$$\Pr\left[\Bigg\lvert\mathbb{E}_{D}[\ell_h(x,\tau; y)]-\frac{1}{|S|}\sum_{(x^{(i)},\tau^{(i)},y^{(i)})\in S}\ell_h(x^{(i)},\tau^{(i)}; y^{(i)})\Bigg\rvert\ge\frac{\epsilon}{2}\right]\le 2e^{\frac{-|S|\epsilon^2}{2}}.$$
By a union bound,
$$\Pr\left[\exists h\in H\text{ s.t. }\Bigg\lvert\mathbb{E}_{D}[\ell_h(x,\tau; y)]-\frac{1}{|S|}\sum_{(x^{(i)},\tau^{(i)},y^{(i)})\in S}\ell_h(x^{(i)},\tau^{(i)}; y^{(i)})\Bigg\rvert\ge\frac{\epsilon}{2}\right]\le 2|H|e^{\frac{-|S|\epsilon^2}{2}}.$$
Applying this to ERM $\hat{h}$ and $h^*$, and noting that the error of $\hat{h}$ on $S$ is no larger than that of $h^*$, implies that 
$$\mathbb{E}_{(x_0,\tau, y)\sim D}[\ell_{\hat{h}}(x_0,\tau,y)- \ell_{h^*}(x_0,\tau,y)] \le \epsilon,$$
with failure probability $\delta\le 2|H|e^{\frac{-|S|\epsilon^2}{2}}$. Solving for $|S|$ gives the desired bound.
\end{proof}

\noindent Since our proof for Theorem \ref{thm:vc-simple} involves bounding the relevant shattering coefficient, we can also readily adapt the proof of the fundamental theorem of PAC learning to establish a $\tilde{O}(\frac{1}{\epsilon^2}\mathsf{VCDim}(H)\log T)$ bound on the sample complexity of agnostic SVPAC-learning for verifier classes $H$ with a finite VC dimension.

\subsection{Agnostic {\strong } verifiers}

We  give a similar agnostic extension for TVPAC learning where the learner has access to a gold standard reasoner that provides up to $k$ correct reasoning traces for any problem $x\in X$, and when $\Sigma$ is finite. For a verifier $h$, we denote its population error as $$\mathrm{err}_D(h):=1-\Pr_{x\sim D}[h \text{ is $1$-complete w.r.t.\ $g$ and sound  for } x].$$

\begin{definition}[agnostic TVPAC-learnability]\label{def:atvpac} 
    Let $X$ denote the problem space and $H\subseteq \{\mbox{YES},\mbox{NO}\}^{X\times\Sigma^*}$ denote the class of verifiers. Let $g(x)\subseteq \Sigma^T$ denote the set of correct reasoning traces for any $x\in X$. Then a learner is said to be an agnostic {\strongly}-verifiably-PAC learner for $H$ with sample size $m=M(\epsilon,\delta)$ (sample complexity is the smallest such $m$) 
    if for any $\epsilon,\delta\in(0,1)$, for any distribution $D$ over $X$, for $h^*\in\text{argmin}_{h\in H}\mathrm{err}_D(h)$ and $\text{OPT}=\mathrm{err}_D(h^*)$, given a sample $S\sim D^m$ and for each $x\in S$ given access to the set $g(x)$, 
    the learner outputs a verifier $h$ such that with probability at least $1-\delta$ over the draw of $S$, $\mathrm{err}_D(h)\le  \text{OPT}+\epsilon$. The learner is said to be proper if $h\in H$. 
\end{definition}

\noindent We show that ERM on the samples constructed using the gold standard reasoner in Section \ref{sec:bounded-g} is an agnostic SVPAC learner   with small sample complexity for  any finite class of verifiers $H$. The argument is similar to that of Theorem \ref{thm:asvpac-finite}.

\begin{theorem}\label{thm:atvpac-finite}
    Any finite class of verifiers $H$ is agnostically TVPAC-learnable with sample complexity $O\left(\frac{1}{\epsilon^2}(\log (|H|) + \log \frac{1}{\delta})\right)$.
\end{theorem}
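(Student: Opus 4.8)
The plan is to adapt the agnostic argument of Theorem~\ref{thm:asvpac-finite}, but using a \emph{per-problem} loss in place of the per-trace loss there. For each verifier $h\in H$ and each problem $x\in X$, define the $\{0,1\}$-valued loss $\ell_h(x):=\mathbb{I}[\,h\text{ is not }1\text{-complete w.r.t.\ }g\text{ and sound for }x\,]$. Since the gold-standard reasoner $g$ is fixed, $\ell_h(x)$ is a deterministic function of $x$, and by the definition of $\mathrm{err}_D$ we have $\mathbb{E}_{x\sim D}[\ell_h(x)]=\mathrm{err}_D(h)$. The learner performs ERM: given $S=(x^{(1)},\dots,x^{(m)})$ together with the sets $g(x^{(i)})$, it computes the empirical error $\widehat{\mathrm{err}}_S(h):=\frac1m\sum_{i=1}^m\ell_h(x^{(i)})$ for every $h\in H$ and outputs an empirical risk minimizer $\hat h\in H$, i.e., $\widehat{\mathrm{err}}_S(\hat h)=\min_{h\in H}\widehat{\mathrm{err}}_S(h)$ (so the learner is proper). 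Evaluating $\ell_h(x^{(i)})$ is possible because $\Sigma$ is finite: the learner forms $C_h(x^{(i)})=\{\tau\in\Sigma^T\mid h\text{ accepts }\tau\}$ and tests whether it equals $g(x^{(i)})$; following Section~\ref{sec:bounded-g}, it in fact suffices to run $h$ on the $O(kT|\Sigma|)$ prefixes of the traces in $g(x^{(i)})$ and their one-step deviations.

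Next I would invoke uniform convergence over the finite family $\{\ell_h:h\in H\}$. For a fixed $h$, the variables $\ell_h(x^{(1)}),\dots,\ell_h(x^{(m)})$ are i.i.d.\ and $\{0,1\}$-valued, so Hoeffding's inequality gives $\Pr[\,|\widehat{\mathrm{err}}_S(h)-\mathrm{err}_D(h)|\ge \epsilon/2\,]\le 2e^{-m\epsilon^2/2}$. A union bound over the at most $|H|$ verifiers then gives, with probability at least $1-2|H|e^{-m\epsilon^2/2}$, that $|\widehat{\mathrm{err}}_S(h)-\mathrm{err}_D(h)|<\epsilon/2$ simultaneously for all $h\in H$. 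Setting $2|H|e^{-m\epsilon^2/2}=\delta$ and solving for $m$ yields $m=O\!\left(\frac1{\epsilon^2}\left(\log|H|+\log\frac1\delta\right)\right)$, as claimed.

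On this high-probability event, the usual ERM comparison gives $\mathrm{err}_D(\hat h)\le \widehat{\mathrm{err}}_S(\hat h)+\epsilon/2\le \widehat{\mathrm{err}}_S(h^*)+\epsilon/2\le \mathrm{err}_D(h^*)+\epsilon=\text{OPT}+\epsilon$, where $h^*$ is the minimizer of $\mathrm{err}_D$ over $H$ and the middle inequality uses optimality of $\hat h$ on $S$; this is exactly agnostic TVPAC-learnability. The one subtle point---and the main obstacle compared to the realizable Theorem~\ref{thm:tvpac-finite-h}---is ensuring that the empirical quantity being minimized is an \emph{unbiased} estimator of $\mathrm{err}_D(h)$. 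The realizable proof only needs the one-sided implication ``if $h$ is not $1$-complete-and-sound for $x$ then $h$ errs on one of the examples constructed from $x$'', whose converse can fail (a verifier may mislabel a short one-step deviation and yet still reject every length-$T$ faulty trace); thus running ERM on the constructed-example loss would only bound $\mathrm{err}_D(\hat h)$ by $\text{OPT}$ plus this slack plus $\epsilon$. This is avoided by having the learner evaluate the completeness-and-soundness indicator $\ell_h(x)$ directly---feasible precisely because $\Sigma$, hence the space of traces of length at most $T$, is finite---so that $\mathbb{E}_x[\ell_h(x)]=\mathrm{err}_D(h)$ holds exactly and the remainder of the argument is routine.
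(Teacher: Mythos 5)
Your proposal is correct and follows essentially the same route as the paper: define the per-problem zero--one loss $\ell_h(x)=\mathbb{I}[h\text{ is not }1\text{-complete-and-sound for }x]$, observe $\mathbb{E}_{x\sim D}[\ell_h(x)]=\mathrm{err}_D(h)$, run ERM on this loss, and apply Hoeffding plus a union bound over the finite class $H$. Your added observation---that the one-step-deviation loss of Section~\ref{sec:bounded-g} only \emph{overestimates} $\ell_h$, because a verifier may accept a short deviating prefix and still reject every length-$T$ extension, so the learner must evaluate the completeness-and-soundness indicator itself---is a valid clarification that the paper leaves implicit. One small internal inconsistency, however: your sentence asserting that it ``in fact suffices to run $h$ on the $O(kT|\Sigma|)$ prefixes and one-step deviations'' to evaluate $\ell_h(x^{(i)})$ contradicts exactly the point you make two sentences later. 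Determining $\ell_h(x)$ exactly may require searching the full tree of prefixes accepted by $h$, which in the worst case is of size on the order of $|\Sigma|^T$ rather than $O(kT|\Sigma|)$; this matters for computational cost but not for the sample-complexity argument, which is correct as written.
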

\begin{proof} The key observation is that our training sample $S=(x^{(i)},g(x^{(i)}))_{i\in[m]}$ allows us to determine $\mathbb{I}[h \text{ is $1$-complete w.r.t.\ $g$ and sound  for } x]$ for any problem $x$ in the sample, by using the tree $\mathcal{T}_g(x)$ and finiteness of $\Sigma$. This gives us the 0-1 loss of $h$ on $x$ which can be used to implement the ERM, and we can apply the same argument as in the proof of Theorem \ref{thm:asvpac-finite} for this loss to conclude the proof.
\end{proof}

\noindent As before, we  can use the bound on the shattering coefficient in our proof of  Theorem \ref{thm:tvpac-vc-h} and  adapt the proof of the fundamental theorem of PAC learning   to establish a $\tilde{O}(\frac{1}{\epsilon^2}\mathsf{VCDim}(H)\log kT|\Sigma|)$ bound on the sample complexity of agnostic TVPAC-learning for verifier classes $H$ with a finite VC dimension.

\end{document}